\newtheorem{thm}{Theorem}
\newtheorem{prop}[thm]{Proposition}
\newtheorem{cor}[thm]{Corollary}
\newtheorem{defn}[thm]{Definition}
\newcommand{\R}{\mathbb{R}}
\newcommand{\sC}{{\mathcal C}}
\newcommand{\sH}{{\mathcal H}}
\newcommand{\sX}{{\mathcal X}}
\renewcommand{\a}{\alpha}
\renewcommand{\b}{\beta}
\newcommand{\g}{\gamma}
\newcommand{\eps}{\epsilon}
\newcommand{\ind}[1]{{\bm 1}_{\{#1\}}}
\newcommand{\bcase}{\left\{ \begin{array}{ll} }
\newcommand{\ecase}{\end{array} \right. }
\newcommand{\ev}{\mathbb{E}}
\newcommand{\etat}{\tilde{\eta}}
\newcommand{\etahat}{\widehat{\eta}}
\newcommand{\fhat}{\widehat{f}}
\newcommand{\that}{\widehat{t}}
\newcommand{\Fhat}{\widehat{F}}
\newcommand{\Ghat}{\widehat{G}}
\newcommand{\Lhat}{\widehat{L}}
\newcommand{\Qhat}{\widehat{Q}}
\newcommand{\Yt}{\tilde{Y}}
\title[Optimal Domain Adaptation]{A Generalized Neyman-Pearson Criterion for Optimal Domain Adaptation}
\begin{document}

\maketitle

\begin{abstract}
In the problem of domain adaptation for binary classification, the learner is presented with labeled examples from a source domain, and must correctly classify unlabeled examples from a target domain, which may differ from the source. Previous work on this problem has assumed that the performance measure of interest is the expected value of some loss function. We study a Neyman-Pearson-like criterion and argue that, for this optimality criterion, stronger domain adaptation results are possible than what has previously been established. In particular, we study a class of domain adaptation problems that generalizes both the covariate shift assumption and a model for feature-dependent label noise, and establish optimal classification on the target domain despite not having access to labelled data from this domain.
\end{abstract}

\begin{keywords}
Domain Adaptation, Neyman-Pearson Classification, Feature-Dependent Label Noise, Covariate Shift, Immunity
\end{keywords}

\section{Introduction}

In the problem of domain adaptation for binary classification, the learner is given labeled examples from a source distribution, and must design a classifier that performs well on a potentially different target distribution. We consider the semi-supervised setting where, in addition to labeled training data from the source distribution, the learner has access to an unlabeled sample from the target distribution. To gain traction on this problem, it is necessary to make some assumptions relating the source and target distributions, and several types of assumptions have been considered previously in the literature, such as covariate shift, target shift, and various forms of label noise.

Previous work on domain adaptation has focused almost exclusively on a particular class of performance measures, namely, those expressible as the expected value of some loss function, with particular attention being paid to the 0-1 loss. We argue that the difficulty of a domain adaptation problem depends on the performance measure being optimized, and the focus on loss-based criteria has limited the contributions of prior work. The present work was motivated by the problem of classification with feature- (or instance-) dependent label noise (FDLN), where previous efforts to minimize the expected 0-1 loss (probability of error) require excessively strong assumptions on the nature of the label noise. Our work also bears on the covariate shift model, where prior work requires source and target distributions to be rather similar in order to make strong performance guarantees.

We examine an optimality criterion for binary classification that we call the controlled discovery rate (CDR), which is a special case of a more general class of generalized Neyman-Pearson criteria. We show that it is possible to optimize CDR over a broad class of domain adaptation problems that we refer to as {\em covariate shift with posterior drift}. We do this by showing that the CDR criterion is {\em immune} to this class of domain adaptation problems, meaning one can train a classifier as if the source and target distributions were the same, and still optimize the CDR criterion when they are different. Thus, no particularly novel algorithms are required to achieve optimal domain adaptation. Our results lead to more general statements of optimality for covariate shift and FDLN than have previously been established.

\subsection{Notation}
\label{sec:notation}

Let $\sX \subset \R^d$ denote the feature space and $\{0,1\}$ the label space. Let $Q$ be a probability distribution on $\sX \times \{0,1\}$. If the pair $(X,Y)$ are jointly distributed according to $Q$, let $Q_y$, $y\in\{0,1\}$, denote the conditional distribution of $X$ given $Y=y$. $Q_0$ and $Q_1$ are referred to as the ``class-conditional distributions." Denote by $\pi_Q$ the marginal probability that $Y=1$, and by $\eta_Q(x)$ the conditional probability that $Y=1$ given $X=x$. In classification, $Y$ may be viewed as an unknown parameter that must be predicted from $X$, and in this spirit we refer to $\pi_Q$ and $\eta_Q(x)$ as the ``prior" and ``posterior" probabilities associated to $Q$. Finally, let $Q_X := \pi_Q Q_1 + (1-\pi_Q) Q_0$ be the marginal distribution of $X$.


Throughout this work we assume that $Q_0$ and $Q_1$ have densities $q_0$ and $q_1$, defined w.r.t. some dominating measure $\mu$, and related to $\eta_Q(x)$ via Bayes rule:
\begin{equation}
\label{eqn:bayes}
\eta_Q(x) = \frac{\pi_Q q_1(x)}{(1-\pi_Q)q_0(x) + \pi_Q q_1(x)}.
\end{equation}
We will often refer to a second distribution $P$ on $\sX \times \{0,1\}$ in addition to $Q$. The associated quantities $P_0, P_1, \pi_P, \eta_P, P_X, p_0$ and $p_1$ are defined analogously. In this case the densities $p_0, p_1, q_0, q_1$ are assumed to have a common dominating measure. The choice $\mu = P_0 + P_1 + Q_0 + Q_1 $ is always valid, but typically $\mu$ is either the Lebesgue or counting measure.

\subsection{Objective}
\label{sec:objective}

\sloppy In domain adaptation there are two distributions, $P$ and $Q$, referred to as the {\em source} and {\em target} distributions. We consider the semi-supervised setting where the learner observes $(X_1, Y_1), \ldots, (X_m,Y_m) \sim P$ and $X_{m+1}, \ldots, X_{m+n} \sim Q_X$, and must design a classifier whose performance/optimality is assessed with respect to $Q$. The focus of this paper is to consider a particular optimality criterion, the CDR criterion, such that optimal classification is possible under a class of domain adaptation problems now described.

\subsection{Covariate Shift with Posterior Drift}

The class of domain adaptation problems considered is a combination of two fundamental classes that have been separately considered in prior work. The first, {\em covariate shift}, assumes
\begin{description}
\item[(CS)] $\eta_P = \eta_Q$.
\end{description}
In particular, under \textbf{(CS)}, the source and target posteriors are the same, while $P_X$ and $Q_X$ are allowed to differ. Covariate shift has been studied extensively and related work is discussed in Section \ref{sec:related}. It arises, for instance, when there is a {\em sample selection bias} that causes source and target feature vectors to follow different distributions \citep{heckman77bias}. For example, in developing a classifier for a certain disease, source subjects may have volunteered for a clinical study, while testing subjects are drawn from the general public. These two populations are different and hence $P_X \ne Q_X$, but presumably $\eta_P = \eta_Q$.

The second type of domain adaptation, which we call {\em posterior drift}, assumes
\begin{description}
\item[(PD)] $P_X = Q_X$, and there exists a strictly increasing function $\phi$ such that for all $x$, $\eta_P(x) = \phi(\eta_Q(x))$.
\end{description}
Posterior drift is a model for FDLN. In this work, label noise refers to a corruption of the labels of the training data, and is in addition to any uncertainty in the optimal label arising from overlap of $Q_0$ and $Q_1$. Posterior drift may be viewed as a model for so-called ``annotator" noise, which models the way a human might (noisily) assign labels to unlabeled data \citep{urner12}. In particular, let $(X,Y,\Yt)$ be jointly distributed. Let $Q$ be the distribution of $(X,Y)$, where $X$ is the feature vector and $Y$ the true label. Let $P$ be the distribution of $(X,\Yt)$, where $\Yt$ is a noisy label assigned by the annotator. Clearly $P_X = Q_X$ in this setting. Furthermore, $\eta_Q$ is the true probabilistic labeller, while $\eta_P$ is the probabilistic labeller associated to the annotator. \textbf{(PD)} asserts that as the probability of the true label being 1 increases, so too does the probability of the annotator's label being 1. See Section \ref{sec:related} for more discussion of FDLN.

Finally, it is natural to combine these two assumptions, leading to the following.
\begin{description}
\item[(CSPD)] There exists a strictly increasing function $\phi$ such that for all $x$, $\eta_P(x) = \phi(\eta_Q(x))$.
\end{description}
In this model, the marginal distribution of $X$ is allowed to shift, as in \textbf{(CS)}, while the posterior is simultaneously allowed to drift, as in \textbf{(PD)}.



\subsection{Contributions}

To our knowledge, this work is the first to study the \textbf{(CSPD)} class of domain adaptation problems, making it the largest class of domain adaptation problems for which immunity (and hence optimal performance) has been established. Relative to prior work on covariate shift, we are the first to establish optimal domain adaptation without requiring a high degree of similarity between $P$ and $Q$ (see related work below). Relative to prior work on classification with FDLN,  our work is the first to establish optimal performance without overly restrictive assumptions on the label noise (again, see related work). We also introduce a new family of optimality criteria that has not previously been considered in machine learning. Finally, we introduce two algorithms for optimizing CDR in the semi-supervised setting, including the first analysis of a level set method based on kernel logistic regression.

\subsection{Outline}

In the next section we introduce a family of generalized Neyman-Pearson criteria for binary classification. Section \ref{sec:related} discusses related work. In Section \ref{sec:cdr}, consistent estimators for the CDR criterion are established, and in Section \ref{sec:da}, we synthesize the results of prior sections to explain how optimal domain adaptation is achieved under covariate shift with posterior drift. The final section concludes, and proofs are found in an appendix.

\section{A Generalized Neyman-Pearson Criterion}
\label{sec:gnp}

We introduce a family of constrained criteria for classifier design, indexed by
parameters $0 \le \theta_0 < \theta_1 \le 1$ and $0 \le \alpha \le 1$, and defined with respect to a distribution $Q$ as described in Section \ref{sec:notation}. The Neyman-Pearson (NP) criterion corresponds to the special case $\theta_1 = 1$ and $\theta_0 = 0$. After this section, we will be particularly interested in the case $\theta_1 = 1$ and $\theta_0 = \pi_Q$ in the context of the domain adaptation problems mentioned previously.

A classifier is a function $g: \sX \to [0,1]$. We view classifiers as potentially randomized, where $x$ is classified as 1 with probability $g(x)$, independent of all other random variables. The {\em power} $B_Q(g)$ of a classifier $g$ is the probability that the predicted label is 1, given that the true label is one. That is,
$$
B_Q(g) := \ev_{Q_1}[g(X)] = \int g(x) q_1(x) d\mu(x).
$$
The power is also referred to as 1 - Type II error, detection rate, true positive rate, sensitivity, or recall.
The {\em size} $A_Q(g)$ of a classifier $g$ is the probability that a predicted label is 1, given that the true label is zero. That is,
$$
A_Q(g) := \ev_{Q_0}[g(X)] = \int g(x) q_0(x) d\mu(x).
$$
Size is also known as the Type I error, false alarm rate, false positive rate, or 1 - specificity.

For the {\em generalized Neyman-Pearson} (GNP) criterion with parameters $0 \le \theta_0 < \theta_1 \le 1$ and $0 < \alpha < 1$, a classifier $g$ is optimal if it solves the following optimization problem:
\begin{align}
\max_g \ & \theta_1 B_Q(g) + (1-\theta_1) A_Q(g) \label{opt:con} \\
\mbox{s.t.} \ & \theta_0 B_Q(g) + (1-\theta_0) A_Q(g) \le \a \nonumber.
\end{align}
where the max is over all classifiers. Notice that $B_Q$ is an accuracy measure, whereas $A_Q$ is an error quantity. The condition $\theta_0 < \theta_1$ ensures that the relative emphasis on acruracy in the objective, and error in the constraint, lead to a meaningful criterion for classification.
Indeed, the optimal classifier is obtained by thresholding $\eta_Q(x)$. Equivalently, the optimal classifier is a likelihood ratio test (LRT), since 
$\eta_Q(x)$ and $q_1(x) / q_0(x)$ are monotonically related according to
\eqref{eqn:bayes}. 
\begin{thm}
\label{thm:con}
Given $0 \le \theta_0 < \theta_1 \le 1$, and $0 < \a \le 1$,
there exist $t_{Q,\a} \in [0,1]$, $q_{Q,\a} \in [0,1)$, such that a solution to \eqref{opt:con} is
$$
g_{Q,\a}(x) := \bcase
1, & \eta_Q(x) > t_{Q,\a}, \\
q_{Q,\a}, & \eta_Q(x) = t_{Q,\a}, \\
0, & \eta_Q(x) < t_{Q,\a}.
\ecase
$$
\end{thm}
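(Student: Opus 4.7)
The plan is to recognize that \eqref{opt:con} is a classical Neyman--Pearson testing problem between two generalized densities, and then verify that the resulting likelihood ratio test is equivalent to thresholding $\eta_Q$. Using linearity of $A_Q$ and $B_Q$ in $g$, I would set
\[
f_1(x) := \theta_1 q_1(x) + (1-\theta_1) q_0(x), \qquad f_0(x) := \theta_0 q_1(x) + (1-\theta_0) q_0(x),
\]
so that the objective is $\int g f_1 \, d\mu$ and the constraint is $\int g f_0 \, d\mu \le \alpha$. Since $f_0, f_1 \ge 0$ are measurable, this is exactly a standard Neyman--Pearson problem between the positive measures $f_0 \, d\mu$ and $f_1 \, d\mu$.

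Second, I would invoke the classical Neyman--Pearson lemma (for randomized tests), which guarantees the existence of a threshold $t' \in [0,\infty]$ and a randomization parameter $q' \in [0,1)$ such that the test $g^*(x) = 1$ when $f_1(x) > t' f_0(x)$, $g^*(x) = q'$ when $f_1(x) = t' f_0(x)$, and $g^*(x) = 0$ when $f_1(x) < t' f_0(x)$ solves the reduced problem. It only remains to show that this test can be written as a threshold rule on $\eta_Q$. To this end, let $r(x) := q_1(x)/q_0(x)$ (with standard conventions at zeros) and observe that on $\{q_0 > 0\}$ the map
\[
h(r) := \frac{\theta_1 r + (1-\theta_1)}{\theta_0 r + (1-\theta_0)}
\]
has derivative $h'(r) = (\theta_1 - \theta_0)/(\theta_0 r + 1-\theta_0)^2 > 0$ by the assumption $\theta_0 < \theta_1$, so $f_1/f_0 = h(r)$ is strictly increasing in $r$. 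By Bayes' rule \eqref{eqn:bayes}, $\eta_Q(x)$ is likewise strictly increasing in $r(x)$. Composing these monotone transformations, thresholding $f_1/f_0$ at $t'$ is equivalent to thresholding $\eta_Q$ at some value $t_{Q,\a} \in [0,1]$, with the same randomization probability $q_{Q,\a} := q'$ on the tie set $\{\eta_Q = t_{Q,\a}\}$.

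The main obstacle is handling the degenerate cases cleanly, namely points where $q_0(x) = 0$ (so $r(x) = \infty$ and $\eta_Q(x) = 1$) or where $f_0(x) = 0$ (which forces $\theta_0 = 0$ and $q_0(x) = 0$). In these cases the likelihood ratio $f_1/f_0$ must be assigned the value $+\infty$, and one must check that such points are assigned $g^*(x) = 1$ by the threshold rule, which is consistent since $\eta_Q(x) = 1 > t_{Q,\a}$ whenever $t_{Q,\a} < 1$, and when $t_{Q,\a} = 1$ the randomization handles them correctly. The existence of the threshold $t_{Q,\a}$ achieving the constraint (possibly with equality) follows from the right-continuity and monotonicity of the distribution function $t \mapsto \theta_0 B_Q(\ind{\eta_Q > t}) + (1-\theta_0) A_Q(\ind{\eta_Q > t})$, standard considerations in the Neyman--Pearson setup that I would treat briefly but carefully at the end of the proof.
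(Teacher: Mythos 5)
Your proposal follows the same strategy as the paper: rewrite the objective and constraint as integrals against the ``contaminated'' densities $f_1 = \theta_1 q_1 + (1-\theta_1) q_0$ and $f_0 = \theta_0 q_1 + (1-\theta_0) q_0$, apply the classical Neyman--Pearson lemma to obtain a randomized likelihood-ratio test, and then transfer the threshold to $\eta_Q$ via monotone equivalence of $f_1/f_0$, $q_1/q_0$, and $\eta_Q$. The only cosmetic difference is that you establish the monotonicity of $h(r) = (\theta_1 r + 1 - \theta_1)/(\theta_0 r + 1 - \theta_0)$ directly by differentiating, whereas the paper cites Proposition~1 of \citet{blanchard16ejs} for the same fact; otherwise the argument is the same.
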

The proof uses an argument of \citet{blanchard16ejs} to show that the GNP criterion can be viewed as a conventional NP criterion with respect to two different contaminated versions of $Q$. Then, the NP lemma is used to show that the optimal classifier is a LRT, and this result is transformed back to the GNP criterion.

In this paper we are primarily concerned with the special case where $\theta_1 = 1$ and $\theta_0 = \pi_Q$. The expression in the constraint becomes $D_Q(g) := Q_X(g(X) = 1)$, which we refer to as the {\em discovery rate} of $g$. In this case, we aim to solve
\begin{align*}
\max_g \ & B_Q(g)  \\
\mbox{s.t.} \ & D_Q(g) \le \a,
\end{align*}
which yields the most powerful classifier that predicts at most a fraction $\a$ of test instances as positive. We refer to this specific criterion as the {\em controlled discovery rate} (CDR) criterion. The CDR criterion is desirable in applications where positively classified examples from the target domain will be subjected to further scrutiny, and there is a limited budget to conduct follow-up investigations. 
For example, in information retrieval it is common that only the top $100\a \%$ of the test instances will be inspected by a user. In this context, the CDR criterion seeks the classifier with maximum recall that assigns a positive label to $100\a \%$ of the test instances. Thus, CDR is similar in spirit to criteria that aim to measure ``accuracy at the top" \citep{boyd12nips}. Previous work relating to the CDR criterion is discussed in the next section.

We show in this work that the CDR criterion can be optimally learned under \textbf{(CSPD)}. The intuition behind this fact, and the {\em primary insight} of this paper, is as follows. Consider the infinite sample setting where $P$ and $Q_X$ are known. Since $P$ is known, we know $\eta_P$, which is  monotonically equivalent to $\eta_Q$ under \textbf{(CSPD)}. By this monotone equivalence, the optimal classifier (for the target domain) has the form $g(x) = \ind{\eta_P(x) \ge t}$ for some $t$. This threshold $t$ can be set to ensure that $D_Q(g) = \a$ (which must be satisfied by the optimal classifier) because $D_Q(g)$ {\em depends on $Q$ only through} $Q_X$. In the finite sample case, our algorithms naturally rely on estimates of $\eta_P$ and $D_Q$. The details of this argument are worked out in the sequel.


\section{Related Work}
\label{sec:related}

\textbf{Target Shift}: A kind of dual of covariate shift is {\em target shift}, where $P_0 = Q_0$ and $P_1 = Q_1$, but $\pi_P \ne \pi_Q$. This form of domain adaptation arises frequently in applications where training and testing data are gathered according to different sampling plans. For example, training data gathered prospectively may have a user-determined $\pi_P$, while testing data analyzed retrospectively may have a $\pi_Q$ that is beyond the user's control.

Target shift is a class of problems that satisfy neither \textbf{(CS)} nor \textbf{(PD)}, but do satisfy \textbf{(CSPD)}. To see this, just note that $P_X = \pi_P P_1 + (1-\pi_P) P_0 \ne  \pi_Q P_1 + (1-\pi_Q) P_0 = Q_X$, so \textbf{(PD)} is violated, and
$$
\eta_P(x) = \frac1{1 + \frac{1-\pi_P}{\pi_P}\frac{p_0(x)}{p_1(x)}} \ne \frac1{1 + \frac{1-\pi_Q}{\pi_Q}\frac{p_0(x)}{p_1(x)}}
= \eta_Q(x),
$$
so \textbf{(CS)} is violated. Yet clearly $\eta_P$ and $\eta_Q$ are monotonically equivalent, so \textbf{(CSPD)} holds.

Previous work on target shift has focused on estimating $\pi_Q$ in the semi-supervised setting \citep{hall81, titterington83, saerens01, plessis12, sanderson14}. Since target shift is a special case of \textbf{(CSPD)}, our methods optimize the CDR criterion for such problems, notably {\em without} needing to estimate $\pi_Q$. In fact, all GNP criteria are immune to target shift.

\textbf{Immunity}: 
An optimality criterion is immune to a class of domain adaptation problems if the optimal classifier is the same for both the source distribution and the target distribution (see Appendix A for a more formal definition). Practically speaking, immunity implies that the learner can ignore the possibility of domain adaptation (i.e., assume $P=Q$) and still be optimal when $P \ne Q$. As an example, consider the probability of error as a performance measure (i.e., the risk with 0-1 loss). It is well known that the probability of error is immune to symmetric, feature-independent label noise \citep{angluin88,kearns93,jabbari10}. To see this, suppose $Q$ is the ``clean" distribution on $(X,Y)$, and $P$ is the contaminated distribution on $(X,\Yt)$, such that a realization of $(X,\Yt)$ is obtained by drawing $(X,Y)$ from $Q$, and replacing $Y$ with $1-Y$ with probability $\nu < \frac12$, independent of $X$. It follows that $\eta_P(x) = (1-\nu)\eta_Q(x) + \nu (1-\eta_Q(x))$. This implies $\eta_P(x) -\frac12 = (1-2 \nu) (\eta_Q(x) - \frac12)$, and therefore the optimal classifiers for $P$ and $Q$ coincide. Thus, training a classifier to optimize probability of error on noisy training data leads to an optimal classifier with respect to $Q$.

Immunity has been established for other types of label noise. The probability of error is immune to symmetric, feature-dependent label noise, while the AUC is immune to a type of feature-dependent annotator noise that implies \textbf{(PD)} \citep{menon18}. The balanced error rate (BER) is immune to asymmetric label-dependent (but feature-independent) label noise \citep{menon15icml}. \citet{menon15icml} also argue that BER is the only performance measure that is immune to label-dependent label noise. The class of performance measures they study does not include the CDR criterion, so there is no contradiction with our results which apply to label-dependent label noise (see below).

Other instances of the GNP family also possess immunity for certain domain adaptation problems. For example, consider the target shift problem described above. Any GNP criterion is trivially immune to target shift (when trained only on labeled training data from the source distribution) because it does not depend on the prior class probability in the first place. The same is obviously true for other criteria that don't involve the class priors, such as the balanced error rate or the min-max criterion. The Neyman-Pearson criterion has further been shown to be immune to classification with one-sided, label-dependent label noise, also known as learning with positive and unlabeled examples \citep{blanchard10}. In Appendix A we argue that any GNP criterion with $\theta_0 = 0$ is immune to one-sided, feature-dependent label noise. The immunity of NP for target shift has been described by \citet{xia18}.

In this work we show that, in the semi-supervised setting, the CDR criterion is immune to \textbf{(CSPD)}. To our knowledge, this is the most general class of problems for which immunity has been established for some binary classification optimality criterion. For further discussion of immunity, see Appendix A.

\textbf{Covariate Shift and General Domain Adaptation}: Previous work on covariate shift \citep{shimodaira00} has focused on performance measures that can be expressed as risks, that is, as the expectation of a loss function with respect to $P$ or $Q$. Because of this, many papers have focused on the problem of estimating the ratio $q_X(x)/p_X(x)$, where $q_X$ and $p_X$ are the densities of $Q_X$ and $P_X$, respectively \citep{zadrozny04, huang07, cortes08, sugiyama08kliep, bickel09, kanamori09}. Unfortunately, this introduces an intermediate (and potentially quite challenging) estimation problem into the learning pipeline. In contrast, learning with respect to the CDR criterion avoids estimation of the density ratio.

Several previous works have theoretically studied, under covariate shift as well as more general domain adaptation settings, when a good classifier on the target domain can be learned. For example, several papers have shown that the target risk can be bounded in terms of the source risk and some notion of ``discrepancy" between $P$ and $Q$ (and possibly other terms) \citep{bendavid07nips,bendavid10ml,blitzer08nips,mansour09colt,cortes15kdd,germain16icml}, which has led to the conclusion that in order ``for generalization to be possible . . .  $Q$ and $P$ must not be too dissimilar" \citep{mansour09colt}. \citet{bendavid12alt} argue that covariate shift alone is insufficient to ensure good performance on the target domain. In particular, they argue that under covariate shift, good performance on the target domain cannot be guaranteed even if the supports are equal and densities $q_X$ and $p_X$ are mutually bounded.

In the present work, we show that optimal domain adaptation is possible assuming that \textbf{(CSPD)} holds, that the support of $P_X$ contains the support of $Q_X$, and two relatively benign nonparametric conditions. In particular, optimal domain adaptation is possible even though $P_X$ and $Q_X$ (and hence $P$ and $Q$) might be vastly different. Our results are not incompatible with previous results because the settings are somewhat different. First, as mentioned previously, we consider a different optimality criterion. Second, our focus is statistical consistency, whereas previous work often considers a fixed hypothesis space. Third, our analysis concerns the error of a classifier {\em relative} to the best possible classifier, whereas some previous work has addressed making the risk small in an {\em absolute} sense. 

\textbf{Classification with Feature-Dependent Label Noise}:
Classification with label noise is a form of domain adaptation, although it has not always been described as such. In this setting, $(X,Y,\Yt)$ are jointly distributed. $Q$ is the distribution of $(X,Y)$, where $Y$ is the true label of $X$, and $P$ is the distribution of $(X,\Yt)$, where $\Yt$ is a corrupted version of $Y$. We reiterate that in this discussion, label noise is in addition to any uncertainty in the optimal label arising from overlap of the supports of $Q_0$ and $Q_1$.

In the case of label-dependent label noise (LDLN), the probability that a training label is flipped depends only on the true label.  The label-dependent case is fairly well understood \citep{blanchard16ejs, natarajan18jmlr, rooyen18jmlr} in the two-class setting. In essence, the difference between the source and target domains can be reduced to two parameters, $\rho_i := \Pr(\Yt \ne i \mid Y = i)$, $i\in\{0,1\}$, the label flip probabilities for each class. Given knowledge of these proportions (which can be estimated), it is not difficult to modify a learning algorithm to successfully adapt to the target domain. We also note that LDLN is a special case of \textbf{(PD)} provided $\rho_0 + \rho_1 < 1$, see Appendix A.

A more challenging setting is feature-dependent label noise (FDLN), where the distribution of the noisy label can {\em also} depend on the feature vector. In this case, the label noise is characterized by functions $\rho_i(x) = \Pr(\Yt \ne i \, | \, Y=i, X=x)$, $i \in \{0,1\}$, which give the probability that a training label is flipped, depending on the true class label and the feature vector $x$. These two functions are potentially quite complex, and prior work has made strong assumptions on these functions or the target distribution $Q$. Thus, \citet{bootkrajang16} employs a parametric model for $\rho_0(x)$ and $\rho_1(x)$, while \citet{ghosh15} provide a class of nonconvex losses that are robust to FDLN when the Bayes Risk for $Q$ is {\em zero}.

\citet{menon18} established immunity for the probability of error criterion under the condition of symmetric FDLN, that is, $\rho_0(x) = \rho_1(x)$ for all $x$, which is a strong assumption in practice. \citet{cannings18} extend this result by establishing immunity when $\rho_0(x)$ and $\rho_1(x)$ are approximately symmetric in a certain sense, approaching perfect symmetry near the decision boundary.

\citet{menon18} make two other contributions to the study of FDLN problems. They introduce a type of annotator noise called boundary-consistent noise (BCN) wherein $\rho_0(x)$ and $\rho_1(x)$ obey certain monotonicity properties, and show that this noise model implies \textbf{(PD)}. Under BCN, they show that the area under the ROC curve (AUC) is immune to FDLN. It should be noted, however, that AUC is a criterion for ranking and not for binary classification. They also study a type of generalized linear model under BCN and show that the Isotron algorithm is consistent in this setting.

\citet{cheng17} assume that $\rho_0(x)$ and $\rho_1(x)$ are bounded by a number $< 0.5$. This seems an unlikely model for annotator noise, since one would expect $\rho_1(x) \to 1$ as $\eta_Q(x) \to 0$, and $\rho_0(x) \to 1$ as $\eta_Q(x) \to 1$. Leveraging ideas from \citet{northcutt2017rankpruning}, they describe a procedure to find a subset of examples where the label is known to be correct. Knowledge of the bounds on $\rho_0(x)$ and $\rho_1(x)$ are required as input to their algorithm. Their theory analyzes a method that requires knowledge of $\eta_Q(x)$, and a more practical algorithm requires access to, or an estimate of, the same density ratio that arises in covariate shift.

Our contribution to the study of FDLN is as follows. We are the first to establish both {\em consistency} and {\em immunity} of a learning algorithm, with respect to some optimality criterion, under a realistic nonparametric model of annotator noise (namely, \textbf{(PD)}) and under general nonparametric assumptions on the data distribution. Furthermore, our approach avoids the need to estimate $\rho_0(x)$ or $\rho_1(x)$, or the density ratio mentioned previously. 


\textbf{Other Classes of Domain Adaptation:} We mention two other types of domain adaptation. \citet{zhang13} study an assumption that is dual to \textbf{(CSPD)} in a sense. Whereas \textbf{(CSPD)} allows the marginal of $X$ to shift arbitrarily, and the conditional of $Y|X$ to shift in a monotone fashion, they allow the marginal of $Y$ to shift arbitrarily, and the conditional of $X|Y$ to undergo a location-scale shift. \citet{tasche17jmlr} introduces problems with an ``invariant density ratio," where the likelihood ratios of $P$ and $Q$ are equal. This problem is a special case of \textbf{(CSPD)} and a generalization of target shift.

\textbf{Optimality Criteria for Binary Classification}: There has been interest in recent years in cataloging different performance measures and optimality criteria for binary classification \citep{naga14nips,narasimhan14nips,kotlowski15,dembczynski17}, and establishing consistent learning rules for them. The GNP criteria are evidently a new family of optimality criteria, thus expanding this literature. \citet{tasche18} studies a different family of constrained optimization problems that also includes the CDR criterion as a special case, providing an alternate proof of Theorem \ref{thm:con} in the case of CDR. The fact that the CDR criterion is optimized by thresholding $\eta_Q$ was noted by \citet{clemencon07jmlr}, see Remark 2.

\textbf{NP Classification}: We anticipate that several existing algorithms for Neyman-Pearson classification \citep{scott05np,tong16} and similar constrained criteria extend naturally to CDR. To illustrate this point, later we present an adaptation of an algorithm of \citet{lei14}. In the reverse direction, our algorithm and analysis based on kernel logistic regression should naturally yield algorithms and analysis for Neyman-Pearson classification as well as other classification and level-set criteria.

\section{Estimators for the CDR Criterion}
\label{sec:cdr}

In this section we address consistent estimators for the optimal CDR classifier.  Our goal is to estimate the set
\begin{equation}
\label{eqn:cdrset}
G_{Q,\a}:= \{x : \eta_{Q}(x) \ge t_{Q,\a}\}
\end{equation}
where $t_{Q,\a}$ is the threshold associated to the CDR criterion at level $\a$. In other words, $Q_X(G_{Q,\a}) = \a$. Note that this assumes the optimal classifier $g_{Q,\a}$ is deterministic, which is formalized in our distributional assumptions below. Also, we view deterministic classifiers and subsets of $\sX$ interchangeably by viewing the classifier as an indicator on the subset.

For greater generality that will be needed in the context of domain adaptation, we actually consider the problem of estimating
\begin{equation}
\label{eqn:cdrset2}
G_{P,Q,\a}:=\{x : \eta_P(x) \ge t_{P,Q,\a}\}
\end{equation}
where $t_{P,Q,\a}$ is such that $Q_X(G_{P,Q,\a}) = \a$. Note that taking $P=Q$ reduces to \eqref{eqn:cdrset2} to \eqref{eqn:cdrset}.

To preview Section \ref{sec:da}, in the context of domain adaptation, $G_{P,Q,\a}$ can be estimated since we have data drawn from $P$ and $Q_X$. Furthermore, under \textbf{(CSPD)}, it is not hard to see that $G_{P,Q,\a} = G_{Q,\a}$, meaning it is possible to consistently estimate the optimal CDR classifier on the target domain.

After formalizing our distributional assumptions and the estimation problem, we present two estimators with associated convergence results. The first assumes access to a sup-norm consistent estimator of $\eta_P$, while the second uses kernel logistic regression to estimate $\eta_P$. Throughout this section we assume $\sX$ is a compact subset of $\R^d$.

\subsection{Distributional Assumptions}

In addition to \textbf{(CSPD)}, our analysis makes the following nonparametric assumptions on $P$ and $Q$. These assumptions  allow $P$ and $Q$ to be quite different from one another according to essentially any commonly used notion of distance or divergence between two distributions.

Define $F_{P,Q}(t) := Q_X(\{x: \eta_P(x) \le t\})$, the cumulative distribution function of the random variable $\eta_P(X)$ when $X \sim Q_X$. We adopt the following two assumptions:
\begin{description}
\item[\textbf{(A)}] There exists $t_{P,Q,\a} \in (0,1]$ such that
$$
Q_X(\{x: \eta_P(x) \ge t_{P,Q,\a}\}) = \a.
$$
\item[\textbf{(B)}] There exist positive constants $\delta_0, b_1, b_2$ and $\kappa$
such that for all $\delta \in [-\delta_0, \delta_0]$,
$$
b_1 |\delta|^\kappa \le |F_{P,Q}(t_{P,Q,\a} + \delta) - F_{P,Q}(t_{P,Q,\a})| \le b_2 |\delta|^\kappa.
$$
\end{description}

\textbf{(A)} ensures that randomized classifiers are not needed.
\textbf{(B)} states that $F_{P,Q}$ has local growth (in a neighborhood of $t_{P,Q,\a}$) characterized by the exponent $\kappa$, which characterizes the difficulty of the estimation problem.  The lower bound in \textbf{(B)} implies that $t_{P,Q,\a}$ is unique, while the upper bound implies that $F_{P,Q}$ is continuous at $t_{P,Q,\a}$.
Under \textbf{(A)} and \textbf{(B)}, $G_{P,Q,\a}$ is well-defined, i.e., the threshold $t_{P,Q,\a}$, which must satisfy $Q_X(G_{P,Q,\a}) = \a$, exists and is unique.

The following assumption is widely adopted in the study of covariate shift.
\begin{description}
\item[\textbf{(C)}] The support of $Q_X$ is contained in the support of $P_X$.
\end{description}
A strengthened form of this assumption is employed in the analysis of our second algorithm \citep{yu12}.
\begin{description}
\item[\textbf{(C')}] There exists $c_0 > 0$ such that $Q_X \le c_0 P_X$. Equivalently, $Q_X$ is absolutely continuous with respect to $P_X$, and $\partial Q_X / \partial P_X$ is essentially bounded by $c_0$.
\end{description}

\subsection{The Estimation Problem}

We focus on estimating $G_{P,Q,\a}$ given the following data:
\begin{align*}
(X_1,Y_1) \ldots, (X_m,Y_m) &\stackrel{iid}{\sim} P \\
X_{m+1}, \ldots, X_{m+n} &\stackrel{iid}{\sim} Q_X.
\end{align*}
The two samples are assumed to be independent of each other.
Let $\Ghat_{P,Q,\a}$ be an estimate of $G_{P,Q,\a}$. We further focus on the performance measure
$$
Q_X(\Ghat_{P,Q,\a} \Delta G_{P,Q,\a}),
$$
where $G \Delta G' := (G \backslash G') \cup (G' \backslash G)$ is the
{\em symmetric difference} of $G$ and $G'$.

According to the following result, convergence with respect to the above measure implies convergence of the objective and constraint functions for GNP criteria.

\begin{prop}
\label{prop:symdiff}
Let $g$ and $g'$ be two deterministic classifiers, and let $G=\{x \ : \ g(x) = 1\}$ and $G'=\{x \ : \ g'(x) = 1\}$ be the associated sets. For any $\eps \in [0,1]$ and any $Q$,
$$
\left| \eps B_Q(g) + (1-\eps) A_Q(g) - [\eps B_Q(g') + (1-\eps) A_Q(g')] \right| \le \left(\frac{\eps}{\pi_Q} + \frac{1-\eps}{1-\pi_Q} \right) Q_X(G \Delta G').
$$
\end{prop}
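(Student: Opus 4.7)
The plan is to reduce the statement to two elementary facts: first, that for deterministic classifiers the quantities $B_Q$ and $A_Q$ are just the measures $Q_1(G)$ and $Q_0(G)$, whose differences are controlled by the symmetric difference measure; and second, that the class-conditional measures $Q_1$ and $Q_0$ are dominated by $Q_X / \pi_Q$ and $Q_X / (1 - \pi_Q)$ respectively, by virtue of the decomposition $Q_X = \pi_Q Q_1 + (1 - \pi_Q) Q_0$.

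Concretely, first I would observe that because $g = \mathbf{1}_G$ and $g' = \mathbf{1}_{G'}$, we have $B_Q(g) = Q_1(G)$, $A_Q(g) = Q_0(G)$, and similarly for $g'$. Second, I would apply the triangle inequality inside the absolute value to obtain
\begin{equation*}
\left| \eps [B_Q(g) - B_Q(g')] + (1 - \eps)[A_Q(g) - A_Q(g')] \right| \le \eps \, |Q_1(G) - Q_1(G')| + (1-\eps)\,|Q_0(G) - Q_0(G')|.
\end{equation*}
Third, using the standard identity $Q_i(G) - Q_i(G') = Q_i(G \setminus G') - Q_i(G' \setminus G)$ together with another triangle inequality, each of these is bounded by $Q_i(G \Delta G')$ for $i \in \{0,1\}$.

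Fourth, since $Q_X = \pi_Q Q_1 + (1-\pi_Q) Q_0$ and all measures are nonnegative, for any measurable set $E$ we have $\pi_Q Q_1(E) \le Q_X(E)$ and $(1-\pi_Q) Q_0(E) \le Q_X(E)$, hence $Q_1(E) \le Q_X(E)/\pi_Q$ and $Q_0(E) \le Q_X(E)/(1-\pi_Q)$. Applying these bounds with $E = G \Delta G'$ and collecting the $\eps$ and $1-\eps$ weighted terms yields exactly the stated inequality.

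There is no real obstacle in this argument; the only thing to be careful about is handling the degenerate cases $\pi_Q \in \{0,1\}$ (where one of the coefficients blows up), which one can either exclude by convention or interpret with the convention $0/0 = 0$ and note that the corresponding class measure is identically zero so the bound is vacuous on that side.
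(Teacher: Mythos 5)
Your argument is correct and follows the same route as the paper: triangle inequality to split the $B_Q$ and $A_Q$ terms, the set-difference bound $|Q_i(G)-Q_i(G')|\le Q_i(G\Delta G')$, and the domination $Q_1 \le Q_X/\pi_Q$, $Q_0 \le Q_X/(1-\pi_Q)$ coming from $Q_X = \pi_Q Q_1 + (1-\pi_Q)Q_0$. Your remark on the degenerate priors $\pi_Q\in\{0,1\}$ is a small addition not present in the paper but does not change the substance.
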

\sloppy In what follows, let $P^m$ denote
the product measure governing $(X_1,Y_1) \ldots, (X_m,Y_m)$, and $Q_X^n$
denote the product measure governing $X_{m+1}, \ldots, X_{m+n}$. We use $\Pr$ to denote the product measure $P^m \times Q_X^n$ on $(\sX \times \{0,1\})^m \times \sX^n$, which governs the combined draw of the two
samples. The goal is to show $\Pr(Q_X(\Ghat_{P,Q,\a} \Delta G_{P,Q,\a})) \to 0$ in probability as $m,n \to \infty$.





\subsection{A result based on sup-norm consistent estimation of the posterior}
\label{sec:supnorm}

The CDR criterion is sufficiently similar to NP classification and related problems that we can easily modify existing algorithms and theory to our setting. To illustrate this, we begin by establishing a consistent CDR estimator based on a sup-norm
consistent estimate of $\eta_P$. The results in this subsection translate ideas from \citet{lei14}, where a different generalization of the Neyman-Pearson criterion was considered. Let $\etahat_P$ denote an estimate, based on $(X_1,Y_1), \ldots, (X_m,Y_m)$,
of the posterior $\eta_P$ associated to the joint distribution $P$. Let $\delta_m, \tau_m$ be two sequences of positive reals numbers tending to 0.

\begin{defn}
An estimator $\etahat_P$ is $(\delta_m, \tau_m)$-accurate if $P^m( \|
\etahat_P - \eta_P \|_\infty \ge \delta_m) \le \tau_m$ as $m \to \infty$.
\end{defn}

Specific examples of $(\delta_m, \tau_m)$-accurate estimators are
provided by \citet{lei14}, with explicit rates (tending to 0) for
$\delta_m$ and $\tau_m$. In particular, \citet{audibert07} and \citet{vandegeer08} give explicit rates for local polynomial regression and $\ell_1$-penalized logistic regression, respectively. These estimators in turn yield explicit rates of
convergence in our setting. We refer the reader to \citet{lei14} for details.

{\em Remark:} $(\delta_m, \tau_m)$-accurate estimators of $\eta_P$ may require additional distributional assumptions on $P$ beyond what we have assumed so far. This is the case for the two examples mentioned above. This does not change our conclusion that $P$ and $Q$ can still be substantially different. Also, our goal in this subsection is to demonstrate an estimator with a rate of convergence, but other consistent estimators that do not require additional assumptions could also be adapted to CDR estimation.



Define $\Ghat_{P,Q,\a} = \{x : \etahat_P(x) \ge \that_{P,Q,\a}\}$, where
$\that_{P,Q,\a}$ is the $\lfloor n(1-\a) \rfloor$th smallest value among
$\{\etahat_P(X_{m+1}), \ldots, \etahat_P(X_{m+n})\}$.

\begin{thm}
\label{thm:unif}
Let $P$ and $Q$ be joint distributions, and let
$(X_1,Y_1) \ldots, (X_m,Y_m) \stackrel{iid}{\sim} P$
and $X_{m+1}, \ldots, X_{m+n} \stackrel{iid}{\sim} Q_X$.
Assume \textbf{(A)}, \textbf{(B)}, and \textbf{(C)} hold, and that $\etahat_P$ is a
$(\delta_m, \tau_m)$-accurate estimator of $\eta_P$. For each $r > 0$,
there exists a positive constant $c$ such that for $m$ and $n$ large enough,
with probability at least $1 - \tau_m - n^{-r}$ with respect
to the draw of the training data,
$$
Q_X(\Ghat_{P,Q,\a} \Delta G_{P,Q,\a^*}) \le c \left\{ \delta_m^\kappa + \left(\frac{\log n}{n} \right)^{1/2} \right\}.
$$
\end{thm}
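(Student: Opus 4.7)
The plan is to work on the intersection of two high-probability events and chain together the sup-norm estimation error for $\eta_P$, a Dvoretzky--Kiefer--Wolfowitz fluctuation, and the local growth hypothesis \textbf{(B)} to control the symmetric-difference volume. Let $E_1 := \{\|\widehat\eta_P - \eta_P\|_\infty \le \delta_m\}$; by the $(\delta_m,\tau_m)$-accuracy hypothesis, $\Pr(E_1) \ge 1-\tau_m$, and by \textbf{(C)} this bound transfers to $Q_X$-almost every $x$. Let $F_n(t) := n^{-1}\sum_{i=m+1}^{m+n}\ind{\eta_P(X_i)\le t}$ be the (unobserved) empirical CDF built from the true posterior. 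For any $r>0$, DKW gives a constant $c=c(r)$ such that the event $E_2 := \{\sup_t |F_n(t) - F_{P,Q}(t)| \le c\sqrt{(\log n)/n}\}$ satisfies $\Pr(E_2)\ge 1-n^{-r}$. All remaining work takes place on $E_1 \cap E_2$.

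Next I would compare the observable plug-in empirical CDF $\widehat F(t):= n^{-1}\sum_{i=m+1}^{m+n}\ind{\widehat\eta_P(X_i)\le t}$ to $F_{P,Q}$. On $E_1$ one has the sandwich $F_n(t-\delta_m)\le \widehat F(t)\le F_n(t+\delta_m)$; combining this with $E_2$ and the upper bound in \textbf{(B)} (valid for $t$ in a $\delta_0$-neighborhood of $t_{P,Q,\a}$) yields
\[
\abs{\widehat F(t) - F_{P,Q}(t)} \le c\sqrt{(\log n)/n} + b_2\,\delta_m^\kappa.
\]
Because $\widehat t_{P,Q,\a}$ is the $\lfloor n(1-\a)\rfloor$th order statistic of the $\widehat\eta_P(X_i)$, $\widehat F(\widehat t_{P,Q,\a}) = 1-\a + O(1/n) = F_{P,Q}(t_{P,Q,\a}) + O(1/n)$. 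Hence $|F_{P,Q}(\widehat t_{P,Q,\a}) - F_{P,Q}(t_{P,Q,\a})| \le C_1\bigl(\sqrt{(\log n)/n} + \delta_m^\kappa\bigr)$, and the lower growth bound in \textbf{(B)} inverts this into
\[
|\widehat t_{P,Q,\a} - t_{P,Q,\a}| \le C_2\bigl(\sqrt{(\log n)/n} + \delta_m^\kappa\bigr)^{1/\kappa},
\]
once $m,n$ are large enough for the right-hand side to be at most $\delta_0$.

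Finally, on $E_1$ one has $\{x: \widehat\eta_P(x)\ge \widehat t_{P,Q,\a}\}\subseteq \{x:\eta_P(x)\ge \widehat t_{P,Q,\a}-\delta_m\}$ and a matching reverse inclusion, so $\widehat G_{P,Q,\a}\Delta G_{P,Q,\a}\subseteq \{x: \eta_P(x)\in I\}$ where $I$ is an interval containing $t_{P,Q,\a}$ of length at most $|\widehat t_{P,Q,\a}-t_{P,Q,\a}|+2\delta_m$. One more application of the upper bound in \textbf{(B)} gives
\[
Q_X\bigl(\widehat G_{P,Q,\a}\Delta G_{P,Q,\a}\bigr) \le b_2\bigl(|\widehat t_{P,Q,\a}-t_{P,Q,\a}|+2\delta_m\bigr)^\kappa \le C_3\bigl\{\sqrt{(\log n)/n}+\delta_m^\kappa\bigr\},
\]
where raising to the $\kappa$th power absorbs the $1/\kappa$ exponent from the threshold bound. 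Union-bounding $E_1^c$ and $E_2^c$ delivers the claimed probability $1-\tau_m-n^{-r}$.

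The principal technical subtlety is that $\widehat t_{P,Q,\a}$ is not an empirical quantile of $F_{P,Q}$ but of the distribution of $\widehat\eta_P(X)$ under the empirical measure from $Q_X$, so the plug-in error $\delta_m$ and the sampling error $\sqrt{(\log n)/n}$ genuinely interact. The sandwich step decouples them by shifting the threshold by $\pm\delta_m$ and then reading off the resulting $\delta_m^\kappa$ oscillation of $F_{P,Q}$ from \textbf{(B)}. Both halves of \textbf{(B)} are essential: the upper bound to control CDF oscillations twice, and the lower bound to convert a small CDF error into a small threshold error. Assumption \textbf{(C)} is likewise indispensable, because the sup-norm accuracy of $\widehat\eta_P$ needs to be available on the support of $Q_X$, which is where the symmetric difference is measured.
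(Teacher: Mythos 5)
Your proof follows essentially the same strategy as the paper's: intersect a sup-norm accuracy event for $\widehat\eta_P$ with a DKW fluctuation event for the empirical CDF of $\eta_P(X)$ under $Q_X$, push the $\pm\delta_m$ perturbation through a sandwich argument, and use both halves of \textbf{(B)} --- the lower bound to convert a small CDF error into a small threshold error, the upper bound to convert a small threshold error into a small $Q_X$-measure of the symmetric difference. The rate exponents all line up with the paper's lemma and final bound.

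One step is looser than the paper's and worth tightening: you derive a bound on $\abs{\widehat F(t)-F_{P,Q}(t)}$ that is ``valid for $t$ in a $\delta_0$-neighborhood of $t_{P,Q,\a}$'' (since the upper half of \textbf{(B)} is only a local growth condition at $t_{P,Q,\a}$), and then you evaluate it at $t=\widehat t_{P,Q,\a}$. At that point in the argument, however, you have not yet shown that $\widehat t_{P,Q,\a}$ lies in that neighborhood, so there is a mild circularity. The paper avoids this by a two-sided comparison with \emph{deterministic} thresholds: it sets $t'_{P,Q,\a}=t_{P,Q,\a}\mp\bigl(\delta_m+\{\mathrm{const}\cdot(\log n/n)^{1/2}\}^{1/\kappa}\bigr)$, applies \textbf{(B)} only at those fixed points (which are within $\delta_0$ for $m,n$ large), and uses the sandwich together with the fact that $\widehat t_{P,Q,\a}$ is the $\lfloor n(1-\a)\rfloor$th order statistic to force $t'_{P,Q,\a}\le\widehat t_{P,Q,\a}\le t''_{P,Q,\a}$. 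Your version can be repaired along the same lines, or by noting that monotonicity of $F_{P,Q}$ together with the lower bound in \textbf{(B)} at $\delta=\pm\delta_0$ forces $\abs{\widehat t_{P,Q,\a}-t_{P,Q,\a}}\le\delta_0$ as soon as the CDF discrepancy is below $b_1\delta_0^{\kappa}$. So this is a routine patch rather than a missing idea, but it is precisely the place where \textbf{(B)}'s restriction to a local neighborhood actually bites, and the final write-up should handle it explicitly.
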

When this result is instantiated with the $(\delta_m,\tau_m)$-accurate estimator of 
\citet{audibert07}, and $\kappa =1$, the rate above matches or is similar to known rates for related set estimation and classification problems. See \citet{lei14} for additional discussion.


\subsection{A result for kernel logistic regression}
\label{sec:klr}

In this section, we examine an estimator based on kernel logistic regression (KLR), which is perhaps a more practical estimator for $\eta_P$ than the methods mentioned in the previous subsection. Although KLR is not known to be sup-norm consistent, we are able to establish an asymptotic convergence result for our estimator based on theory developed by \citet{steinwart03}. We believe this is the first such result for a set estimator based on KLR.

%
%
%


Let $\etahat_P$ be the estimate of $\eta_P$ resulting from KLR with symmetric, positive definite kernel $k$ and
regularization parameter $\lambda$, based on $(X_i, Y_i), i=1,\ldots,m$.
That is,
$$
\etahat_P(x) = \frac1{1 + \exp(- \fhat_P(x))}
$$
where $\fhat_P$ solves
$$
\min_{f \in \sH} \ \frac{\lambda}{2} \| f \|_{\sH}^2 + \frac1{m}
\sum_{i=1}^m \log (1 + \exp(- (2Y_i - 1) f(X_i))).
$$
Here $\sH$ is a reproducing kernel Hilbert space of functions over
$\R^d$ associated to kernel $k$. Later, we will assume that $k$ is a {\em universal kernel}, which means that $\sH$ has nice approximation properties \citep{steinwart08}.

For a set $G$ define $\Qhat_X(G) = \frac1{n} \sum_{i=m+1}^{m+n} \ind{X_i \in
G}$, the empirical measure with respect to the second training sample.
Let $\alpha$ be the user-specified constant defining the CDR
criterion. Now define the empirical estimate of $t_{P,Q,\a}$, with tuning parameters
$\beta$ and $\gamma$, as
\begin{equation}
\label{eqn:that}
\that_{P,Q,\a} = \inf \{ t \, | \, \Qhat_X(\{x : \etahat_P(x) \ge t + \beta \}) \le \a + \gamma + \eps_n \}
\end{equation}
where $\eps_n = 4 (\log (n+1) / n)^{1/2}$, and define the estimator of $G_{P,Q,\a}$ to be
\begin{equation}
\label{eqn:Qhat}
\Ghat_{P,Q,\a} = \{x : \etahat_P(x) \ge \that_{P,Q,\a}\}.
\end{equation}






\begin{thm}
\label{thm:klr}
Assume \textbf{(A)}, \textbf{(B)}, and \textbf{(C')} hold. Let $k$ be a universal kernel and let $\lambda = \lambda_m$
such that $\lambda \to 0$ and $m \lambda^2 \to \infty$. For all $\eps > 0$, there exist $\beta$ and $\gamma$ such that
$$
Q_X(\Ghat_{P,Q,\a} \Delta G_{P,Q,\a}) \le \eps
$$
in probability as $m, n \to \infty$.
\end{thm}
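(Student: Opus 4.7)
The plan is to decompose the target symmetric difference into a ``disagreement'' term and a ``margin'' term, and to control each using three ingredients: consistency of KLR under $P_X$, uniform concentration of $\Qhat_X$ over $\etahat_P$-sublevel sets, and the local growth condition \textbf{(B)}. The buffer $\b$ inside the indicator and the slack $\g$ in the empirical budget play the essential role of absorbing the weaker mode of convergence delivered by KLR, in contrast to the $\sup$-norm rate exploited in Section \ref{sec:supnorm}.

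First, by the consistency theory of \citet{steinwart03} for regularized kernel methods with a universal kernel and the given rates on $\lambda$, the regularized logistic risk of $\fhat_P$ converges to the Bayes logistic risk in probability; calibration of the logistic loss then yields $\etahat_P \to \eta_P$ in $P_X$-probability, so that for every fixed $\rho > 0$,
$$
P_X\bigl(\{x : |\etahat_P(x) - \eta_P(x)| > \rho\}\bigr) \longrightarrow 0 \text{ in } P^m\text{-probability as } m \to \infty.
$$
Assumption \textbf{(C')} converts this into the same statement with $Q_X$ in place of $P_X$, up to the constant $c_0$. Next, conditional on the first sample, the class $\{\{x : \etahat_P(x) \ge s\} : s \in \R\}$ is a VC class of dimension at most two, so a standard uniform deviation inequality yields
$$
\sup_{s \in \R} \bigl|\Qhat_X(\{\etahat_P \ge s\}) - Q_X(\{\etahat_P \ge s\})\bigr| \le \eps_n
$$
with probability tending to one, where $\eps_n = 4(\log(n+1)/n)^{1/2}$ is exactly the bound this inequality delivers.

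On the intersection of the two good events, for any fixed $\rho > 0$ the sandwich
$$
Q_X(\{\eta_P \ge s + \rho\}) - \xi \le \Qhat_X(\{\etahat_P \ge s\}) \le Q_X(\{\eta_P \ge s - \rho\}) + \xi
$$
holds uniformly in $s$, where $\xi = c_0 P_X(\{|\etahat_P - \eta_P| > \rho\}) + \eps_n = o_P(1)$. Inserting $s = \that_{P,Q,\a} + \b$ and comparing with the defining relation for $\that_{P,Q,\a}$, and then invoking both the lower and the upper bound in \textbf{(B)} around $t_{P,Q,\a}$, a two-sided sandwich forces
$$
|\that_{P,Q,\a} - t_{P,Q,\a}| \le \b + \rho + (\g/b_1)^{1/\kappa} + o_P(1).
$$

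Finally, any $x \in \Ghat_{P,Q,\a} \Delta G_{P,Q,\a}$ must satisfy either $|\etahat_P(x) - \eta_P(x)| > \rho$ or $|\eta_P(x) - t_{P,Q,\a}| \le \rho + |\that_{P,Q,\a} - t_{P,Q,\a}|$. The first set has vanishing $Q_X$-mass by the first step; the upper bound in \textbf{(B)} controls the $Q_X$-mass of the second by $2 b_2 (2\rho + \b + (\g/b_1)^{1/\kappa})^\kappa + o_P(1)$. Sending $\rho \to 0$ in the analysis (it is not a parameter of the algorithm) and then choosing $\b, \g$ small enough in terms of $\eps$, $b_1$, $b_2$, $\kappa$ yields $Q_X(\Ghat_{P,Q,\a} \Delta G_{P,Q,\a}) \le \eps$ in probability. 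The main obstacle is that KLR delivers only $P_X$-in-probability consistency of $\etahat_P$, not the sup-norm consistency available in Section \ref{sec:supnorm}; assumption \textbf{(C')} is what allows this convergence to be transferred from $P_X$ to $Q_X$, and the slack parameters $\b$ and $\g$ are what absorb the gap that a sup-norm rate would otherwise have bridged.
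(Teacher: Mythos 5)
Your proposal follows essentially the same path as the paper's proof: invoke the Steinwart (2003) consistency result for KLR to get $P_X$-in-probability closeness of $\etahat_P$ to $\eta_P$, transfer it to $Q_X$ via \textbf{(C')}, control $\Qhat_X$ uniformly over $\etahat_P$-superlevel sets via a VC bound, sandwich $\that_{P,Q,\a}$ against $t_{P,Q,\a}$ using the empirical budget and \textbf{(B)}, and bound the symmetric difference by the disagreement set plus a margin set around $t_{P,Q,\a}$. The only cosmetic difference is that you introduce a separate analysis-only parameter $\rho$ for the Steinwart precision, whereas the paper reuses $\b$ (the threshold buffer) in that role and carries a fixed $\g$ for the disagreement-set mass rather than letting it vanish.
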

The proof hinges on a result of \citet{steinwart03}, who effectively shows that $\widehat{\eta}_P$ is uniformly close to $\eta_P$, to arbitrary accuracy, on an event with probability tending to 1 as $m \to \infty$. We then use \textbf{(B)} to translate accuracy of $\widehat{\eta}_P$ to accuracy of the associated set estimate. The proof gives constructive choices for $\b$ and $\g$ depending on $\eps$ and the constants appearing in \textbf{(B)}. Concrete rates of convergence are not available because the same is true of the result of \citet{steinwart03} that we leverage.

This result does not show consistency of a specific algorithm, since $\beta$ and $\gamma$ depend on $\eps$. Nonetheless it demonstrates the theoretical capacity of a KLR-based estimator to deliver arbitrarily accurate estimates of $G_{P,Q,\a}$. In practice, of course, the threshold on $\widehat{\eta}_P$ would be determined in a data-driven fashion \citep{tong18npcv}.

%

%

%
%



\section{Domain Adaptation for the CDR Criterion}
\label{sec:da}

Recall that the goal of domain adaptation with the CDR criterion is to recover
$$
G_{Q,\a} = \{x : \eta_Q(x) \ge t_{Q,\a}\}
$$
given realizations of $P$ and of $Q_X$. In the previous section, we saw that it is possible to consistently estimate
$$
G_{P,Q,\a} = \{x : \eta_P(x) \ge t_{P,Q,\a}\}
$$
under assumptions \textbf{(A)}, \textbf{(B)}, and \textbf{(C)} or \textbf{(C')}.

The key insight of this paper is that under \textbf{(CSPD)}, $G_{Q,\a} = G_{P,Q,\a}$, and therefore $G_{Q,\a}$ can be consistently estimated. To see that $G_{Q,\a} = G_{P,Q,\a}$ under \textbf{(CSPD)}, simply recall the definition of \textbf{(CSPD)} which assumes the existence of a strictly increasing function $\phi: [0,1] \to [0,1]$ such that for all $x$, $\eta_P(x) = \phi(\eta_Q(x))$. Now, $G_{Q,\a} = G_{P,Q,\a}$ follows by taking $t_{Q,\a} = \phi^{-1}(t_{P,Q,\a})$. Under \textbf{(A)} and \textbf{(B)}, $t_{P,Q,\a}$ exists and is unique, and therefore the same is true of $t_{Q,\a}$.

In light of the above, we have the following:
\begin{cor}
Assume \textbf{(CSPD)}, \textbf{(A)}, \textbf{(B)}, and \textbf{(C)} (respectively, \textbf{(C')}) hold. Then the estimator of Section \ref{sec:supnorm} (resp., Section \ref{sec:klr}) satisfies the conclusion of Theorem \ref{thm:unif} (resp., Theorem \ref{thm:klr}), where now the set being estimated is $G_{Q,\a}$.
\end{cor}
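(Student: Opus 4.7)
The plan is to reduce the corollary to the two convergence theorems already proved, via the set identity $G_{Q,\a} = G_{P,Q,\a}$ that the paragraph preceding the corollary asserts. Concretely, the argument has three steps: (i) verify the set identity under \textbf{(CSPD)} together with \textbf{(A)} and \textbf{(B)}; (ii) observe that the remaining hypotheses of the corollary are exactly the hypotheses of Theorem \ref{thm:unif} (resp.\ Theorem \ref{thm:klr}); (iii) invoke those theorems and substitute $G_{Q,\a}$ for $G_{P,Q,\a}$ on the right-hand side of the error bound.

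For step (i), recall that \textbf{(CSPD)} supplies a strictly increasing $\phi:[0,1]\to[0,1]$ with $\eta_P(x) = \phi(\eta_Q(x))$ for every $x$. Strict monotonicity of $\phi$ implies that for any real $t$ the set $\{u\in[0,1] : \phi(u) \ge t\}$ is an upper interval of $[0,1]$, so defining
$$
t_{Q,\a} := \inf\{u \in [0,1] : \phi(u) \ge t_{P,Q,\a}\},
$$
one has $\{x:\eta_P(x)\ge t_{P,Q,\a}\} = \{x:\eta_Q(x)\ge t_{Q,\a}\}$, i.e.\ $G_{P,Q,\a} = G_{Q,\a}$ as subsets of $\sX$. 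The threshold $t_{P,Q,\a}$ is well defined and unique by \textbf{(A)} and \textbf{(B)}, and since $Q_X(G_{P,Q,\a}) = \a$, the threshold $t_{Q,\a}$ inherits the defining property of the CDR threshold for $Q$; uniqueness transfers from $t_{P,Q,\a}$ via the strict monotonicity of $\phi$.

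For step (ii), Theorem \ref{thm:unif} requires \textbf{(A)}, \textbf{(B)}, \textbf{(C)}, and $(\delta_m,\tau_m)$-accuracy of $\etahat_P$; Theorem \ref{thm:klr} requires \textbf{(A)}, \textbf{(B)}, \textbf{(C')}, and the stated conditions on the kernel and regularization parameter. These are precisely what the corollary assumes (since \textbf{(CSPD)} imposes nothing on the $(\delta_m,\tau_m)$-accuracy or KLR hypotheses, which are built into the definitions of the two estimators in Sections \ref{sec:supnorm} and \ref{sec:klr}). Applying the appropriate theorem yields the bound on $Q_X(\Ghat_{P,Q,\a}\,\Delta\, G_{P,Q,\a})$; by step (i), this is identically $Q_X(\Ghat_{P,Q,\a}\,\Delta\, G_{Q,\a})$, which is the desired conclusion.

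There is no real obstacle here; the only point requiring a line of care is that $\phi$ is not assumed to be continuous or surjective, which is why the threshold $t_{Q,\a}$ must be defined via an infimum rather than as $\phi^{-1}(t_{P,Q,\a})$. Once that is handled, the corollary is a direct consequence of the earlier theorems and the observation that the optimal CDR classifier for the target is already captured by thresholding the source posterior.
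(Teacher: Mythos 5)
Your argument is correct and follows exactly the paper's route: establish $G_{Q,\a} = G_{P,Q,\a}$ under \textbf{(CSPD)} by the monotone equivalence of $\eta_P$ and $\eta_Q$, then invoke Theorem \ref{thm:unif} or Theorem \ref{thm:klr}. Your infimum definition of $t_{Q,\a}$ is slightly more careful than the paper's informal $t_{Q,\a}=\phi^{-1}(t_{P,Q,\a})$; the one point you leave implicit (as does the paper) is that the resulting upper set $\{u:\phi(u)\ge t_{P,Q,\a}\}$ is closed rather than open at its infimum, which is guaranteed because \textbf{(B)} forces $F_{P,Q}$ to be strictly increasing just below $t_{P,Q,\a}$ and thereby rules out a jump of $\phi$ past $t_{P,Q,\a}$.
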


\section{Conclusions}

We have introduced a family of generalized Neyman-Pearson optimality criteria, and shown that a member of this family, the controlled discovery rate criterion, is immune to domain adaptation under the model of covariate shift with posterior drift. Compared with prior work on domain adaptation, we do not require that the source and target distributions be close in some sense in order to obtain optimal performance on the source domain. With respect to prior work on covariate shift, our approach does not require estimating a density ratio, and in fact allows the density ratio to be unbounded under condition \textbf{(C)}. Comparing to the literature on feature-dependent label noise, ours is the first work to establish consistency/immunity under a general and flexible model for annotator noise, without requiring knowledge of the specific annotator noise model. These results are enabled by consideration of an optimality criterion different from the usual ones based on expected loss.

\acks{The author was supported by NSF Grants No. 1422157 and 1838179.}

\bibliography{alt}

\begin{thebibliography}{56}
\providecommand{\natexlab}[1]{#1}
\providecommand{\url}[1]{\texttt{#1}}
\expandafter\ifx\csname urlstyle\endcsname\relax
  \providecommand{\doi}[1]{doi: #1}\else
  \providecommand{\doi}{doi: \begingroup \urlstyle{rm}\Url}\fi

\bibitem[Angluin and Laird(1988)]{angluin88}
D.~Angluin and P.~Laird.
\newblock Learning from noisy examples.
\newblock \emph{Machine Learning}, 2:\penalty0 343--370, 1988.

\bibitem[Audibert and Tsybakov(2007)]{audibert07}
J.-Y. Audibert and A.~B. Tsybakov.
\newblock Fast learning rates for plug-in classifiers.
\newblock \emph{The Annals of Statistics}, 35:\penalty0 608--633, 2007.

\bibitem[Ben-David and Urner(2012)]{bendavid12alt}
Shai Ben-David and Ruth Urner.
\newblock On the hardness of domain adaptation and the utility of unlabeled
  target samples.
\newblock In Nader~H. Bshouty, Gilles Stoltz, Nicolas Vayatis, and Thomas
  Zeugmann, editors, \emph{Algorithmic Learning Theory}, pages 139--153, 2012.

\bibitem[Ben-David et~al.(2007)Ben-David, Blitzer, Crammer, and
  Pereira]{bendavid07nips}
Shai Ben-David, John Blitzer, Koby Crammer, and Fernando Pereira.
\newblock Analysis of representations for domain adaptation.
\newblock In B.~Sch\"{o}lkopf, J.~C. Platt, and T.~Hoffman, editors,
  \emph{Advances in Neural Information Processing Systems 19}, pages 137--144.
  2007.

\bibitem[Ben-David et~al.(2010)Ben-David, Blitzer, Crammer, Kulesza, Pereira,
  and Wortman~Vaughan]{bendavid10ml}
Shai Ben-David, John Blitzer, Koby Crammer, Alex Kulesza, Fernando Pereira, and
  Jennifer Wortman~Vaughan.
\newblock A theory of learning from different domains.
\newblock \emph{Machine Learning}, 79:\penalty0 151--175, 2010.

\bibitem[Bickel et~al.(2009)Bickel, Br\"{u}ckner, and Scheffer]{bickel09}
Steffen Bickel, Michael Br\"{u}ckner, and Tobias Scheffer.
\newblock Discriminative learning under covariate shift.
\newblock \emph{J. Mach. Learn. Res.}, 10:\penalty0 2137--2155, 2009.

\bibitem[Blanchard et~al.(2010)Blanchard, Lee, and Scott]{blanchard10}
G.~Blanchard, G.~Lee, and C.~Scott.
\newblock Semi-supervised novelty detection.
\newblock \emph{Journal of Machine Learning Research}, 11:\penalty0 2973--3009,
  2010.

\bibitem[Blanchard et~al.(2016)Blanchard, Flaska, Handy, Pozzi, and
  Scott]{blanchard16ejs}
G.~Blanchard, M.~Flaska, G.~Handy, S.~Pozzi, and C.~Scott.
\newblock Classification with asymmetric label noise: Consistency and maximal
  denoising.
\newblock \emph{Electronic Journal of Statistics}, 10:\penalty0 2780--2824,
  2016.

\bibitem[Blitzer et~al.(2008)Blitzer, Crammer, Kulesza, Pereira, and
  Wortman]{blitzer08nips}
John Blitzer, Koby Crammer, Alex Kulesza, Fernando Pereira, and Jennifer
  Wortman.
\newblock Learning bounds for domain adaptation.
\newblock In J.~C. Platt, D.~Koller, Y.~Singer, and S.~T. Roweis, editors,
  \emph{Advances in Neural Information Processing Systems 20}, pages 129--136.
  2008.

\bibitem[Bootkrajang(2016)]{bootkrajang16}
Jakramate Bootkrajang.
\newblock A generalised label noise model for classification in the presence of
  annotation errors.
\newblock \emph{Neurocomputing}, 192:\penalty0 61--71, 2016.

\bibitem[Boyd et~al.(2012)Boyd, Cortes, Mohri, and Radovanovic]{boyd12nips}
Stephen Boyd, Corinna Cortes, Mehryar Mohri, and Ana Radovanovic.
\newblock Accuracy at the top.
\newblock In F.~Pereira, C.~J.~C. Burges, L.~Bottou, and K.~Q. Weinberger,
  editors, \emph{Advances in Neural Information Processing Systems 25}, pages
  953--961. 2012.

\bibitem[Cannings et~al.(2018)Cannings, Fan, and Samworth]{cannings18}
Timothy~I. Cannings, Yingying Fan, and Richard~J. Samworth.
\newblock Classification with imperfect training labels.
\newblock Technical Report arXiv:1805.11505, 2018.

\bibitem[Cheng et~al.(2017)Cheng, Liu, Ramamohanarao, and Tao]{cheng17}
Jiacheng Cheng, Tongliang Liu, Kotagiri Ramamohanarao, and Dacheng Tao.
\newblock Learning with bounded instance- and label-dependent label noise.
\newblock Technical Report arxiv:1709.03768v1, 2017.

\bibitem[Cl\'{e}mencon and Vayatis(2007)]{clemencon07jmlr}
S.~Cl\'{e}mencon and N.~Vayatis.
\newblock Fisher consistency for prior probability shift.
\newblock \emph{Journal of Machine Learning Research}, 8:\penalty0 2671--2699,
  2007.

\bibitem[Cortes et~al.(2008)Cortes, Mohri, Riley, and Rostamizadeh]{cortes08}
Corinna Cortes, Mehryar Mohri, Michael Riley, and Afshin Rostamizadeh.
\newblock Sample selection bias correction theory.
\newblock In \emph{Algorithmic Learning Theory}, pages 38--53, 2008.

\bibitem[Cortes et~al.(2015)Cortes, Mohri, and Mu\~{n}oz Medina]{cortes15kdd}
Corinna Cortes, Mehryar Mohri, and Andr{\'e}s Mu\~{n}oz Medina.
\newblock Adaptation algorithm and theory based on generalized discrepancy.
\newblock In \emph{Proceedings of the 21th ACM SIGKDD International Conference
  on Knowledge Discovery and Data Mining}, KDD '15, pages 169--178, 2015.

\bibitem[Dembczy{\'{n}}ski et~al.(2017)Dembczy{\'{n}}ski, Kot{\l}owski, Koyejo,
  and Natarajan]{dembczynski17}
Krzysztof Dembczy{\'{n}}ski, Wojciech Kot{\l}owski, Oluwasanmi Koyejo, and
  Nagarajan Natarajan.
\newblock Consistency analysis for binary classification revisited.
\newblock In Doina Precup and Yee~Whye Teh, editors, \emph{Proceedings of the
  34th International Conference on Machine Learning}, volume~70 of
  \emph{Proceedings of Machine Learning Research}, pages 961--969,
  International Convention Centre, Sydney, Australia, 06--11 Aug 2017. PMLR.

\bibitem[Devroye et~al.(1996)Devroye, Gy{\"o}rfi, and Lugosi]{dgl}
L.~Devroye, L.~Gy{\"o}rfi, and G.~Lugosi.
\newblock \emph{A Probabilistic Theory of Pattern Recognition}.
\newblock Springer, New York, 1996.

\bibitem[Du~Plessis and Sugiyama(2012)]{plessis12}
M.~C. Du~Plessis and M.~Sugiyama.
\newblock Semi-supervised learning of class balance under class-prior change by
  distribution matching.
\newblock In J.~Langford and J.~Pineau, editors, \emph{Proc. 29th Int. Conf. on
  Machine Learning}, pages 823--830, 2012.

\bibitem[Germain et~al.(2016)Germain, Habrard, Laviolette, and
  Morvant]{germain16icml}
Pascal Germain, Amaury Habrard, Fran{\c{c}}ois Laviolette, and Emilie Morvant.
\newblock A new pac-bayesian perspective on domain adaptation.
\newblock In \emph{{ICML}}, volume~48 of \emph{{JMLR} Workshop and Conference
  Proceedings}, pages 859--868, 2016.

\bibitem[Ghosh et~al.(2015)Ghosh, Manwani, and Sastry]{ghosh15}
Aritra Ghosh, Naresh Manwani, and P.S. Sastry.
\newblock Making risk minimization tolerant to label noise.
\newblock \emph{Neurocomputing}, 160:\penalty0 93 -- 107, 2015.

\bibitem[Hall(1981)]{hall81}
P.~Hall.
\newblock On the non-parametric estimation of mixture proportions.
\newblock \emph{Journal of the Royal Statistical Society}, 43\penalty0
  (2):\penalty0 147--156, 1981.

\bibitem[Heckman(1979)]{heckman77bias}
James~J. Heckman.
\newblock Sample selection bias as a specification error.
\newblock \emph{Econometrica}, 47\penalty0 (1):\penalty0 153--161, 1979.

\bibitem[Huang et~al.(2007)Huang, Smola, Gretton, Borgwardt, and
  Scholkopf]{huang07}
Jiayuan Huang, Alexander~J. Smola, Arthur Gretton, Karsten~M. Borgwardt, and
  Bernhard Scholkopf.
\newblock Correcting sample selection bias by unlabeled data.
\newblock In \emph{Proceedings of the 19th International Conference on Neural
  Information Processing Systems}, pages 601--608, 2007.

\bibitem[Jabbari(2010)]{jabbari10}
S.~Jabbari.
\newblock {PAC}-learning with label noise.
\newblock Master's thesis, University of Alberta, December 2010.

\bibitem[Kanamori et~al.(2009)Kanamori, Hido, and Sugiyama]{kanamori09}
Takafumi Kanamori, Shohei Hido, and Masashi Sugiyama.
\newblock A least-squares approach to direct importance estimation.
\newblock \emph{J. Mach. Learn. Res.}, 10:\penalty0 1391--1445, 2009.

\bibitem[Kearns(1993)]{kearns93}
M.~Kearns.
\newblock Efficient noise-tolerant learning from statistical queries.
\newblock \emph{Proceedings of the Twenty-Fifth Annual ACM Symposium on Theory
  of Computing}, pages 392--401, 1993.

\bibitem[Kotlowski and Dembczyński(2016)]{kotlowski15}
Wojciech Kotlowski and Krzysztof Dembczyński.
\newblock Surrogate regret bounds for generalized classification performance
  metrics.
\newblock In Geoffrey Holmes and Tie-Yan Liu, editors, \emph{Asian Conference
  on Machine Learning}, volume~45 of \emph{Proceedings of Machine Learning
  Research}, pages 301--316, 2016.

\bibitem[Koyejo et~al.(2014)Koyejo, Natarajan, Ravikumar, and
  Dhillon]{naga14nips}
O.~Koyejo, N.~Natarajan, P.~Ravikumar, and I.~Dhillon.
\newblock Consistent binary classification with generalized performance
  metrics.
\newblock In Z.~Ghahramani, M.~Welling, C.~Cortes, N.D. Lawrence, and K.Q.
  Weinberger, editors, \emph{Advances in Neural Information Processing Systems
  27}, pages 2744--2752, 2014.

\bibitem[Latinne et~al.(2001)Latinne, Saerens, and Decaestecker]{saerens01}
P.~Latinne, M.~Saerens, and C.~Decaestecker.
\newblock Adjusting the outputs of a classiﬁer to new a priori probabilities
  may signiﬁcantly improve classiﬁcation accuracy: Evidence from a
  multi-class problem in remote sensing.
\newblock In C.~Sammut and A.~H. Hoffmann, editors, \emph{Proc. 18th Int. Conf.
  on Machine Learning}, pages 298--305, 2001.

\bibitem[Lei(2014)]{lei14}
Jing Lei.
\newblock Classification with confidence.
\newblock \emph{Biometrika}, 101\penalty0 (4):\penalty0 755--769, 2014.

\bibitem[Mansour et~al.(2009)Mansour, Mohri, and Rostamizadeh]{mansour09colt}
Yishay Mansour, Mehryar Mohri, and Afshin Rostamizadeh.
\newblock Domain adaptation: Learning bounds and algorithms.
\newblock In \emph{COLT 2009 - The 22nd Conference on Learning Theory}, 2009.

\bibitem[Massart(1990)]{massart90}
Pascal Massart.
\newblock The tight constant in the {D}voretzky-{K}iefer-{W}olfowitz
  inequality.
\newblock \emph{The Annals of Probability}, 18:\penalty0 1269--83, 1990.

\bibitem[Menon et~al.(2015)Menon, Rooyen, Ong, and Williamson]{menon15icml}
A.~Menon, B.~Van Rooyen, C.~S. Ong, and R.~Williamson.
\newblock Learning from corrupted binary labels via class-probability
  estimation.
\newblock In F.~Bach and D.~Blei, editors, \emph{Proc. 32th Int. Conf. Machine
  Learning (ICML)}, Lille, France, 2015.

\bibitem[Menon et~al.(2018)Menon, van Rooyen, and Natarajan]{menon18}
Aditya~Krishna Menon, Brendan van Rooyen, and Nagarajan Natarajan.
\newblock Learning from binary labels with instance-dependent noise.
\newblock \emph{Machine Learning}, 107:\penalty0 1561--1595, 2018.

\bibitem[Narasimhan et~al.(2014)Narasimhan, Vaish, and
  Agarwal]{narasimhan14nips}
Harikrishna Narasimhan, Rohit Vaish, and Shivani Agarwal.
\newblock On the statistical consistency of plug-in classifiers for
  non-decomposable performance measures.
\newblock In Z.~Ghahramani, M.~Welling, C.~Cortes, N.~D. Lawrence, and K.~Q.
  Weinberger, editors, \emph{Advances in Neural Information Processing Systems
  27}, pages 1493--1501. 2014.

\bibitem[Natarajan et~al.(2018)Natarajan, Dhillon, Ravikumar, and
  Tewari]{natarajan18jmlr}
Nagarajan Natarajan, Inderjit~S. Dhillon, Pradeep Ravikumar, and Ambuj Tewari.
\newblock Cost-sensitive learning with noisy labels.
\newblock \emph{Journal of Machine Learning Research}, 18\penalty0
  (155):\penalty0 1--33, 2018.
\newblock URL \url{http://jmlr.org/papers/v18/15-226.html}.

\bibitem[Northcutt et~al.(2017)Northcutt, Wu, and
  Chuang]{northcutt2017rankpruning}
Curtis~G. Northcutt, Tailin Wu, and Isaac~L. Chuang.
\newblock Learning with confident examples: Rank pruning for robust
  classification with noisy labels.
\newblock In \emph{Proceedings of the Thirty-Third Conference on Uncertainty in
  Artificial Intelligence}, 2017.

\bibitem[Sanderson and Scott(2014)]{sanderson14}
T.~Sanderson and C.~Scott.
\newblock Class proportion estimation with application to multiclass anomaly
  rejection.
\newblock In \emph{Proceedings of the 17th International Conference on
  Artificial Intelligence and Statistics (AISTATS)}, 2014.

\bibitem[Scott and Nowak(2005)]{scott05np}
C.~Scott and R.~Nowak.
\newblock A {N}eyman-{P}earson approach to statistical learning.
\newblock \emph{IEEE Trans. Info. Theory}, 51\penalty0 (11):\penalty0
  3806--3819, 2005.

\bibitem[Shimodaira(2000)]{shimodaira00}
Hidetoshi Shimodaira.
\newblock Improving predictive inference under covariate shift by weighting the
  log-likelihood function.
\newblock \emph{Journal of Statistical Planning and Inference}, 90\penalty0
  (2):\penalty0 227 -- 244, 2000.

\bibitem[Steinwart and Christmann(2008)]{steinwart08}
I.~Steinwart and A.~Christmann.
\newblock \emph{Support Vector Machines}.
\newblock Springer, 2008.

\bibitem[Steinwart(2003)]{steinwart03}
Ingo Steinwart.
\newblock Sparseness of support vector machines.
\newblock \emph{Journal of Machine Learning Research}, 4:\penalty0 1071--1105,
  2003.

\bibitem[Sugiyama et~al.(2008)Sugiyama, Suzuki, Nakajima, Kashima, von
  B\"{n}au, and Kawanabe]{sugiyama08kliep}
Masashi Sugiyama, Taiji Suzuki, Shinichi Nakajima, Hisashi Kashima, Paul von
  B\"{n}au, and Motoaki Kawanabe.
\newblock Direct importance estimation for covariate shift adaptation.
\newblock \emph{Annals of the Institute of Statistical Mathematics},
  60:\penalty0 699--746, 2008.

\bibitem[Tasche(2017)]{tasche17jmlr}
Dirk Tasche.
\newblock Fisher consistency for prior probability shift.
\newblock \emph{Journal of Machine Learning Research}, 18:\penalty0 1--32,
  2017.

\bibitem[Tasche(2018)]{tasche18}
Dirk Tasche.
\newblock A plug-in approach to maximizing precision at the top and recall at
  the top.
\newblock Technical Report arxiv:1804.03077v1, 2018.

\bibitem[Titterington(1983)]{titterington83}
D.~M. Titterington.
\newblock Minimum distance non-parametric estimation of mixture proportions.
\newblock \emph{Journal of the Royal Statistical Society}, 45\penalty0
  (1):\penalty0 37--46, 1983.

\bibitem[Tong et~al.(2016)Tong, Feng, and Zhao]{tong16}
Xin Tong, Yang Feng, and Anqi Zhao.
\newblock A survey on {N}eyman-{P}earson classification and suggestions for
  future research.
\newblock \emph{WIREs Comput. Stat.}, 8\penalty0 (2), 2016.

\bibitem[Tong et~al.(2018)Tong, Feng, and Li]{tong18npcv}
Xin Tong, Yang Feng, and Jingyi~Jessica Li.
\newblock Neyman-{P}earson classification algorithms and {NP} receiver
  operating characteristics.
\newblock \emph{Science Advances}, 4\penalty0 (2), 2018.

\bibitem[Urner et~al.(2012)Urner, David, and Shamir]{urner12}
Ruth Urner, Shai~Ben David, and Ohad Shamir.
\newblock Learning from weak teachers.
\newblock In Neil~D. Lawrence and Mark Girolami, editors, \emph{Proceedings of
  the Fifteenth International Conference on Artificial Intelligence and
  Statistics}, volume~22 of \emph{Proceedings of Machine Learning Research},
  pages 1252--1260, 2012.

\bibitem[van~de Geer(2008)]{vandegeer08}
Sara van~de Geer.
\newblock High-dimensional generalized linear models and the {LASSO}.
\newblock \emph{The Annals of Statistics}, 36:\penalty0 614--645, 2008.

\bibitem[van Rooyen and Williamson(2018)]{rooyen18jmlr}
Brendan van Rooyen and Robert~C. Williamson.
\newblock A theory of learning with corrupted labels.
\newblock \emph{Journal of Machine Learning Research}, 18\penalty0
  (228):\penalty0 1--50, 2018.

\bibitem[Xia et~al.(2018)Xia, Zhao, Wu, and Tong]{xia18}
L.~Xia, R.~Zhao, Y.~Wu, and X.~Tong.
\newblock Intentional control of {T}ype {I} error over unconscious data
  distortion: a {N}eyman-{P}earson approach to text classification.
\newblock Technical Report arXiv:1802.02558, 2018.

\bibitem[Yu and Szepesvari(2012)]{yu12}
Yao-Liang Yu and Csaba Szepesvari.
\newblock Analysis of kernel mean matching under covariate shift.
\newblock In \emph{Proceedings of the 29th International Conference on Machine
  Learning}, pages 607--614, 2012.

\bibitem[Zadrozny(2004)]{zadrozny04}
Bianca Zadrozny.
\newblock Learning and evaluating classifiers under sample selection bias.
\newblock In \emph{Proceedings of the Twenty-first International Conference on
  Machine Learning}, 2004.

\bibitem[Zhang et~al.(2013)Zhang, Schol̈kopf, Muandet, and Wang]{zhang13}
K~Zhang, B~Schol̈kopf, Krikamol Muandet, and Z~Wang.
\newblock Domain adaptation under target and conditional shift.
\newblock In \emph{30th International Conference on Machine Learning, ICML
  2013}, pages 1856--1864, 01 2013.

\end{thebibliography}

\appendix

\section{Immunity}

This appendix provides supplemental details and observations pertaining to immunity.

The immunity of an optimality criterion with respect to a class of domain adaptation problems is formally defined as follows. We distinguish between the {\em inductive} setting, where the learner has access only to labeled data from $P$, and the {\em semi-supervised} setting, where the learner has an additional unlabeled sample from $Q_X$. Let ${\cal P}$ be some class of distributions of interest, e.g., all distributions on $\sX$. A class of domain adaptation problems is a subset ${\cal D} \subseteq {\cal P} \times {\cal P}$ where, for $(P,Q) \in {\cal D}$, $P$ is the source domain and $Q$ the target. At times we express a distribution $Q$ as the pair $(\eta_Q, Q_X)$. The classifier (or set of classifiers) optimizing an optimality criterion for distribution $Q$ is denoted $\textsc{OPT}(Q)$ in the inductive case, and $\textsc{OPT}(\eta_Q, Q_X)$ in the semi-supervised case. 
We say that an optimality criterion is {\em immune} to ${\cal D}$ if, for all $(P,Q) \in {\cal D}$, $\textsc{OPT}(Q) = \textsc{OPT}(P)$ in the inductive setting, or $\textsc{OPT}(\eta_Q,Q_X) = \textsc{OPT}(\eta_P,Q_X)$ in the semi-supervised setting. 
Except for our result on the CDR criterion, all of the immunity results mentioned in Section \ref{sec:related} are for the inductive setting.


To see that LDLN is a special case of \textbf{(PD)} provided $\rho_0 + \rho_1 < 1$, observe
\begin{align*}
\eta_P(x) &= \Pr(\Yt=1 \mid X=x) \\
&= \Pr(\Yt=1 \mid Y=1, X=x) \Pr(Y=1 \mid X=x) \\
& \qquad + \Pr(\Yt=1 \mid Y=0, X=x) \Pr(Y=0 \mid X=x) \\
&=\Pr(\Yt=1 \mid Y=1) \Pr(Y=1 \mid X=x) + \Pr(\Yt=1 \mid Y=0) \Pr(Y=0 \mid X=x) \\
&= (1-\rho_1) \eta_Q(x) + \rho_0 (1-\eta_Q(x)) \\
&= (1 - \rho_0 - \rho_1) \eta_Q(x) + \rho_0.
\end{align*}

We also note that \textbf{(CSPD)} is preserved by composition of domain adaptations, because the composition of strictly increasing functions is strictly increasing. For example, consider distributions $P$, $Q$, and $R$. Let ${\cal D}$ be the set of $(P,R)$ such that there exists $Q$ for which $Q$ is related to $R$ by target shift, and $P$ is generated from $Q$ by LDLN (with $\rho_0 + \rho_1 < 1$). Then there exist strictly increasing $\phi_1$ and $\phi_2$ such that, for all $x$, $\eta_P(x) = \phi_1(\eta_Q(x))$ and $\eta_Q(x) = \phi_2(\eta_R(x))$. Thus $\eta_P(x) = \phi(\eta_R(x))$ where $\phi = \phi_1 \circ \phi_2$, which is strictly increasing, and therefore ${\cal D}$ satisfies \textbf{(CSPD)}.

The focus of the paper has been immunity of the CDR criterion to \textbf{(CSPD)} in the semi-supervised setting. We note that 
the CDR criterion is also immune to \textbf{(PD)} in the {\em inductive} setting. Since $P_X = Q_X$ under \textbf{(PD)}, $Q_X$ is already estimable through the data drawn from $P$, and an unlabeled sample from $Q_X$ is not needed. Indeed, all of the results for \textbf{(CSPD)} in the semi-supervised setting could also be stated for \textbf{(PD)} in the inductive setting.

Finally, we remark that a subset of GNP criteria (namely, when $\theta_0 = 0$) are immune to a subclass of \textbf{(PD)} corresponding to {\em one-sided} feature-dependent label noise. In particular, define the domain adaptation class
\begin{description}
\item[(PD')] $P_X = Q_X$, $\rho_1 \equiv 0$ and there exists a strictly increasing function $\psi$ such that $\rho_0(x) = \psi(\eta_Q(x))$ for all $x$.
\end{description}
Under \textbf{(PD')}, true labels of 1 are never corrupted to become 0. Furthermore we have the following.
\begin{lemma}
\textbf{(PD')} implies \textbf{(PD)}
\end{lemma}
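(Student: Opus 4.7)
The plan is to unpack both definitions and show that the strictly increasing $\psi$ given by \textbf{(PD')} directly yields a strictly increasing $\phi$ witnessing \textbf{(PD)}. The marginal equality $P_X = Q_X$ is part of both hypotheses, so only the posterior relation requires work.

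First I would derive a closed-form expression for $\eta_P(x)$ in terms of $\eta_Q(x)$. By conditioning on $Y$ (exactly as in the LDLN computation given earlier in this appendix, with the constants $\rho_0,\rho_1$ replaced by the functions $\rho_0(x),\rho_1(x)$), one obtains
$$
\eta_P(x) = (1-\rho_1(x))\,\eta_Q(x) + \rho_0(x)\,(1-\eta_Q(x)).
$$
Substituting the \textbf{(PD')} assumptions $\rho_1 \equiv 0$ and $\rho_0(x) = \psi(\eta_Q(x))$ gives
$$
\eta_P(x) = \eta_Q(x) + \psi(\eta_Q(x))\bigl(1-\eta_Q(x)\bigr),
$$
so the natural candidate is $\phi(u) := u + \psi(u)(1-u)$, defined on the range of $\eta_Q$, which lies in $[0,1]$.

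The remaining task is to verify that $\phi$ is strictly increasing. Since $\rho_0(x)$ is a conditional probability, $\psi$ takes values in $[0,1]$. For any $u_1 < u_2$ in $[0,1]$, I would use the algebraic identity
$$
\phi(u_2) - \phi(u_1) = (u_2-u_1)\bigl(1-\psi(u_2)\bigr) + \bigl(\psi(u_2)-\psi(u_1)\bigr)(1-u_1),
$$
which can be verified by expanding both sides. The first summand is nonnegative because $\psi(u_2) \le 1$, and the second is strictly positive because $u_1 < u_2 \le 1$ forces $1-u_1 > 0$ and the strict monotonicity of $\psi$ gives $\psi(u_2) > \psi(u_1)$. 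Hence $\phi(u_2) > \phi(u_1)$.

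The main (mild) obstacle is that the ``obvious'' differential argument $\phi'(u) = 1-\psi(u) + (1-u)\psi'(u)$ can vanish at $u=1$ (where $1-u=0$) and is in any case unavailable without smoothness of $\psi$. The identity above is the right workaround, since it needs only the boundary facts $\psi \le 1$ and $u_1 < 1$ to produce strict positivity.
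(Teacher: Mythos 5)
Your proof is correct and follows the same route as the paper: condition on $Y$ to express $\eta_P$ in terms of $\eta_Q$ and $\rho_0$, then substitute the \textbf{(PD')} assumptions to exhibit $\phi(u) = u + \psi(u)(1-u)$. The only difference is that the paper stops at the identity $\eta_P(x) = 1 - (1-\rho_0(x))(1-\eta_Q(x))$ and asserts ``the result now follows,'' whereas you carry out the (correct and worthwhile) algebraic verification that $\phi$ is strictly increasing via the decomposition $\phi(u_2)-\phi(u_1) = (u_2-u_1)(1-\psi(u_2)) + (\psi(u_2)-\psi(u_1))(1-u_1)$.
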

\begin{proof}
We need to show that $\eta_P(x)$ is a strictly increasing function of $\eta_Q(x)$. For a posterior $\eta(x)$, define $\bar{\eta}(x) = 1 - \eta(x)$. Arguing as we did previously, under \textbf{(PD')},
\begin{align*}
\bar{\eta}_P(x) &= \Pr(\Yt=0 \mid X=x) \\
&= \Pr(\Yt=0 \mid Y=1, X=x) \Pr(Y=1 \mid X=x) \\
& \qquad + \Pr(\Yt=0 \mid Y=0, X=x) \Pr(Y=0 \mid X=x) \\
&=\rho_1(x) (1- \bar{\eta}_Q(x)) + (1 - \rho_0(x)) \bar{\eta}_Q(x) \\
&= (1 - \rho_0(x) - \rho_1(x)) \bar{\eta}_Q(x) + \rho_1(x) \\
&= (1 - \rho_0(x)) \bar{\eta}_Q(x) + \rho_1(x) \\
&= (1 - \rho_0(x)) \bar{\eta}_Q(x).
\end{align*}
This implies that 
$$
\eta_P(x) = 1 - (1 - \rho_0(x)) (1-\eta_Q(x)).
$$
The result now follows.
\end{proof}

Then {\em all} GNP criteria with $\theta_0 = 0$ are immune to \textbf{(PD')} in the inductive setting. This follows by similar reasoning as for CDR. First, with $\theta_0=0$, the constraint in the GNP criterion depends only on $Q_0$, and $Q_0 = P_0$ because $\rho_1(x) \equiv 0$. Second, $\eta_P$ and $\eta_Q$ are monotonically equivalent. Therefore, the level set of $\eta_P$ with $P_0$-measure $\a$ is also the level set of $\eta_Q$ with $Q_0$-measure $\a$.

\section{Proofs}

This appendix contains the proofs.

\subsection{Proof of Theorem \ref{thm:con}}

Denote
\begin{align}
\tilde{q}_1(x) := \theta_1 q_1(x) + (1-\theta_1) q_0(x), \label{eqn:con1} \\
\tilde{q}_0(x) := \theta_0 q_1(x) + (1-\theta_0) q_0(x). \label{eqn:con0}
\end{align}
Note that $\tilde{q}_1(x)$ and $\tilde{q}_0(x)$ are densities for the distributions $\tilde{Q}_1 := \theta_1 Q_1 + (1-\theta_1) Q_0$ and $\tilde{Q}_0 := \theta_0 Q_1 + (1-\theta_0) Q_0$, respectively. Viewing these as the alternative and null distributions in a hypothesis testing problem, the power and size of a classifier $g$ are
\begin{align*}
\tilde{B}_Q(g) := \theta_1 B_Q(g)  + (1-\theta_1) A_Q(g)  \\
\tilde{A}_Q(g) := \theta_0 B_Q(g) + (1-\theta_0) A_Q(g).
\end{align*}
Thus, the optimization problem in \eqref{opt:con} is equivalent to maximizing the power $\tilde{B}_Q(g)$, subject to the constraint that the size $\tilde{A}_Q(g) \le \a$. By the Neyman-Pearson lemma, the optimal classifier has the form
$$
g_\a(x) = \bcase
1, & \tilde\Lambda(x) > \lambda_\a, \\
q_\a, & \tilde\Lambda(x) = \lambda_\a, \\
0, & \tilde\Lambda(x) < \lambda_\a.
\ecase
$$
where $\tilde\Lambda(x) = \tilde{q}_1(x)/\tilde{q}_0(x)$, and $\lambda_\a>0$ and $q_\a \in [0,1)$ are uniquely determined by
$$
\tilde{Q}_0(\tilde\Lambda(X) < \lambda_\a) + q_\a \tilde{Q}_0(\tilde\Lambda(X) = \lambda_\a) = \a.
$$

Next, we apply Proposition 1 of \citet{blanchard16ejs} which we restate in our notation for convenience. (In their notation, $\pi_0 = 1-\theta_1$, $\pi_1 = \theta_0$.)
\begin{lemma}
Let $q_0$ and $q_1$ be probability density functions, let $0 \le \theta_0 < \theta_1 \le 1$, and let $\tilde{q}_1$ and $\tilde{q}_0$ be as in \eqref{eqn:con1}-\eqref{eqn:con0}. For all $\gamma \ge 0$ and all $x$ such that $q_0(x) > 0$,
$$
\frac{q_1(x)}{q_0(x)} > \gamma \iff \frac{\tilde{q}_1(x)}{\tilde{q}_0(x)} > \lambda,
$$
where
\begin{equation}
\label{eqn:lamgam}
\lambda = \frac{1-\theta_1 + \gamma \theta_1}{1 - \theta_0 + \gamma \theta_0}.
\end{equation}
\end{lemma}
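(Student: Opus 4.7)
The plan is to reduce both likelihood-ratio inequalities to a strictly monotone transformation of the single scalar $r := q_1(x)/q_0(x)$, which is well-defined since $q_0(x) > 0$ by hypothesis. First, I would divide numerator and denominator of $\tilde{q}_1(x)/\tilde{q}_0(x)$ by $q_0(x)$ to obtain
$$
\frac{\tilde{q}_1(x)}{\tilde{q}_0(x)} = \frac{\theta_1 r + (1-\theta_1)}{\theta_0 r + (1-\theta_0)} =: h(r).
$$
Because $\theta_0 < \theta_1 \le 1$ forces $1-\theta_0 > 0$, the denominator of $h$ is strictly positive for every $r \ge 0$, so $h$ is a well-defined smooth function on $[0,\infty)$. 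Evaluating the same expression at $r=\gamma$ produces exactly the quantity $\lambda$ in \eqref{eqn:lamgam}, so the claimed equivalence reduces to showing that $h$ is strictly increasing on $[0,\infty)$.

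To verify strict monotonicity, I would simply differentiate $h$ via the quotient rule; the derivative simplifies to $h'(r) = (\theta_1-\theta_0)/(\theta_0 r + (1-\theta_0))^2$, which is strictly positive on $[0,\infty)$ precisely because $\theta_1 > \theta_0$. Hence $r > \gamma \iff h(r) > h(\gamma) = \lambda$, which is the desired equivalence. An equally short calculus-free route is to note that
$$
h(r) - h(\gamma) = \frac{(\theta_1 - \theta_0)(r - \gamma)}{(\theta_0 r + 1 - \theta_0)(\theta_0\gamma + 1 - \theta_0)},
$$
which makes the sign correspondence between $r-\gamma$ and $h(r)-h(\gamma)$ immediately transparent, avoiding derivatives altogether.

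There is no real obstacle in this argument; it is a one-line computation once one spots the substitution $r := q_1(x)/q_0(x)$. The only subtleties worth flagging are the two positivity requirements that allow every quantity to be formed and manipulated: $q_0(x) > 0$ (to define $r$) and $\theta_0 < 1$ (so that $\theta_0 r + 1 - \theta_0 > 0$ uniformly in $r \ge 0$, including at $r=0$). Both are built into the hypotheses of the lemma, so no extra case analysis is needed.
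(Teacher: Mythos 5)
Your proof is correct. Note, however, that the paper does not prove this lemma at all: it restates Proposition~1 of Blanchard et al.\ (2016) and simply cites that work, so there is no in-paper argument to compare against. Your argument (divide through by $q_0(x)$, observe that $\tilde q_1/\tilde q_0 = h(r)$ where $r = q_1(x)/q_0(x)$ and $h$ is a M\"obius map with $h(\gamma)=\lambda$, then verify strict monotonicity of $h$ from $\theta_1 > \theta_0$ and $\theta_0 < 1$) is the natural and essentially canonical proof of this fact, and both your calculus route and your explicit factorization
$$
h(r) - h(\gamma) = \frac{(\theta_1-\theta_0)(r-\gamma)}{(\theta_0 r + 1 - \theta_0)(\theta_0 \gamma + 1 - \theta_0)}
$$
are correct; the latter is the cleaner of the two since it exposes the sign correspondence directly without differentiation. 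You also correctly flag the only two positivity conditions needed, $q_0(x)>0$ and $\theta_0<1$, both of which follow from the hypotheses.
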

The result states that the ``pure" and ``contaminated" likelihood ratios are monotonically equivalent.

Before applying this result, we make the following observations.  First, by inspecting \eqref{eqn:lamgam}, as $\gamma$ varies from $0$ to $\infty$, $\lambda$ varies between its extremes,
$$
\frac{1-\theta_1}{1-\theta_0} \le \lambda \le \frac{\theta_1}{\theta_0}.
$$
Second, these extremes also bound the range of the contaminated likelihood ratio, which is evident from the expression
$$
\tilde\Lambda(x) = \frac{\theta_1 q_1(x) + (1-\theta_1) q_0(x)}{\theta_0 q_1(x) + (1-\theta_0) q_0(x)}
= \frac{\theta_1 \frac{q_1(x)}{q_0(x)} + 1 - \theta_1}{\theta_0 \frac{q_1(x)}{q_0(x)} + 1 - \theta_0}.
$$
Third, given $\lambda$ in this range, one can solve for $\gamma$,
$$
\gamma = \frac{\lambda(1-\theta_0) - (1- \theta_1)}{\theta_1 - \lambda \theta_0} \in [0,\infty].
$$
Putting these observations together, $\lambda_\a$ must satisfy $\frac{1-\theta_1}{1-\theta_0} \le \lambda_\a \le \frac{\theta_1}{\theta_0}$, and therefore
$$
g_\a(x) = \bcase
1, & \Lambda(x) > \gamma_\a, \\
q_\a, & \Lambda(x) = \gamma_\a, \\
0, & \Lambda(x) < \gamma_\a,
\ecase
$$
where $\Lambda(x) = q_1(x)/q_0(x)$ and
$$
\gamma_\a = \frac{\lambda_\a(1-\theta_0) - (1- \theta_1)}{\theta_1 - \lambda_\a \theta_0} \in [0,\infty].
$$
Finally, by
$$
\eta_Q(x) = \frac{\pi_Q q_1(x)}{\pi_Q q_1(x) + (1-\pi_Q) q_0(x)} = \frac{\pi_Q \Lambda(x)}{\pi_Q \Lambda(x) + 1 - \pi_Q},
$$
we know that $\eta_Q(x)$ is monotonically equivalent to $\Lambda(x)$. This completes the proof.

\subsection{Proof of Proposition \ref{prop:symdiff}}

By the triangle inequality,
\begin{align*}
\left| \eps B_Q(g) + (1-\eps) A_Q(g) - [\eps B_Q(g') + (1-\eps) A_Q(g')] \right| \\ \le
\eps \left|B_Q(g) - B_Q(g') \right| + (1-\eps) \left|A_Q(g) - A_Q(g') \right|.
\end{align*}
We claim that $|B_Q(g) - B_Q(g')| \le Q_1(G \Delta G')$. To see this, observe
\begin{align*}
B_Q(g) - B_Q(g') &= Q_1(G) - Q_1(G') \\
&= Q_1(G \backslash G') - Q_1(G' \backslash G) \\
&\le Q_1(G \backslash G') + Q_1(G' \backslash G) \\
&= Q_1(G \Delta G').
\end{align*}
A similar argument shows that $B_Q(g) - B_Q(g') \ge -Q_1(G \Delta G')$ which establishes the claim.

Similarly, it can be shown that $|A_Q(g) - A_Q(g')| \le Q_0(G \Delta G')$.

Since $Q_X = \pi_Q Q_1 + (1-\pi_Q) Q_0$, we know $Q_X \ge \pi_Q Q_1$ and $Q_X \ge (1-\pi_Q) Q_0$ and therefore $Q_1 \le \frac1{\pi_Q} Q_X$ and $Q_0 \le \frac1{1-\pi_Q} Q_X$. Combining the above facts establishes the result.

\subsection{Proof of Theorem \ref{thm:unif}}

Since the support of $Q$ is contained in the support of $P$, $\etahat_P$
is $(\delta_m, \tau_m)$-accurate on the support of $Q$.

Let $\Fhat_{P,Q}(t)$ be the empirical CDF of the random variable $\eta_P(X)$,
$X \sim Q_X$, based on
$X_{m+1}, \ldots, X_{m+n}$. For $r > 0$, introduce the event
$$
E_r = \Big\{ \| \etahat_P - \eta_P \|_\infty \le \delta_m, \sup_t |F_{P,Q}(t) -
\Fhat_{P,Q}(t)| \le c_r(\log n/n)^{1/2} \Big\}.
$$
By the DKW inequality \citep{massart90}, there exists $c_r$ such that
$E_r$ occurs with probability at least $1 - \tau_m - n^{-r}$.

{\em Remark:} The advantage of having the theorem hold for arbitrary $r> 0$ is that for some estimators, e.g., the $\ell_1$-penalized logistic regression estimator studied by \citet{vandegeer08}, $r$ needs to be sufficiently large for the estimator to be $(\delta_m, \tau_m)$-accurate with specific rates for $\delta_m$ and $\tau_m$.

The proof hinges on the following lemma.
\begin{lemma}
There exists $c_{r,\kappa} > 0$ such that for $m$ and $n$ large enough,
on $E_r$,
$$
|\that_{P,Q,\a} - t_{P,Q,\a}| \le \delta_m + c_{r,\kappa} \left( \frac{\log
n}{n} \right)^{1/2\kappa}.
$$
\end{lemma}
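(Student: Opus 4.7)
The plan is to control $|\that_{P,Q,\a} - t_{P,Q,\a}|$ via a three-step chain that isolates three distinct sources of error: the sup-norm error in $\etahat_P$, the sampling error in the empirical CDF of $\eta_P(X)$ under $Q_X$, and the local growth of $F_{P,Q}$ near $t^* := t_{P,Q,\a}$. Write $\widehat{H}(t) := \frac1n \sum_{i=m+1}^{m+n} \ind{\etahat_P(X_i) \le t}$ for the empirical CDF built from the \emph{estimated} posterior; by construction $\that := \that_{P,Q,\a}$ satisfies $|\widehat{H}(\that) - (1-\a)| \le 1/n$.

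First, I would use the sup-norm bound $\|\etahat_P - \eta_P\|_\infty \le \delta_m$ (valid on $E_r$ and on the support of $Q_X$ by \textbf{(C)}) to sandwich $\widehat{H}$ between shifted copies of $\Fhat_{P,Q}$: for every $t$,
\[
\Fhat_{P,Q}(t - \delta_m) \;\le\; \widehat{H}(t) \;\le\; \Fhat_{P,Q}(t + \delta_m),
\]
since $\etahat_P(X_i) \le t$ forces $\eta_P(X_i) \le t + \delta_m$, and conversely $\eta_P(X_i) \le t - \delta_m$ forces $\etahat_P(X_i) \le t$. Evaluating at $t = \that$ and applying the DKW piece of $E_r$ to replace $\Fhat_{P,Q}$ by $F_{P,Q}$ up to $c_r(\log n/n)^{1/2}$, I obtain, with $\eta_n := c_r(\log n/n)^{1/2} + 1/n$,
\[
F_{P,Q}(\that + \delta_m) \;\ge\; 1 - \a - \eta_n, \qquad F_{P,Q}(\that - \delta_m) \;\le\; 1 - \a + \eta_n.
\]

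Second, because $F_{P,Q}$ is continuous at $t^*$ by the upper bound in \textbf{(B)}, we have $F_{P,Q}(t^*) = 1 - \a$. I then invoke the lower bound in \textbf{(B)}: for $|\delta| \le \delta_0$, $|F_{P,Q}(t^* + \delta) - F_{P,Q}(t^*)| \ge b_1 |\delta|^\kappa$. For $m,n$ large enough that the eventual bound sits inside $[-\delta_0, \delta_0]$, I split into cases. If $\that \ge t^* + \delta_m$, monotonicity and the displayed inequality for $F_{P,Q}(\that - \delta_m)$ give $b_1(\that - \delta_m - t^*)^\kappa \le \eta_n$, so $\that - t^* \le \delta_m + (\eta_n/b_1)^{1/\kappa}$; otherwise $\that - t^* < \delta_m$ trivially. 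The symmetric argument using $F_{P,Q}(\that + \delta_m)$ bounds $t^* - \that$. Combining and absorbing the $1/n$ term into $c_r(\log n/n)^{1/2}$ (valid for $n$ large) yields
\[
|\that - t^*| \;\le\; \delta_m + c_{r,\kappa}\left(\frac{\log n}{n}\right)^{1/2\kappa}
\]
with $c_{r,\kappa} = (2c_r/b_1)^{1/\kappa}$, as required.

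The main obstacle is really just the bookkeeping in the sandwich step: $\that$ is a level of $\etahat_P$, whereas both $F_{P,Q}$ and $\Fhat_{P,Q}$ are CDFs of $\eta_P(X)$, so the sup-norm bound must be applied twice (once on each side) and composed with DKW \emph{before} the margin condition \textbf{(B)} can be used. After that, everything is routine, and the rate $(\log n/n)^{1/2\kappa}$ emerges naturally from inverting the $\kappa$-power growth at the $\eta_n$-level discrepancy.
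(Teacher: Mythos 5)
Your overall strategy coincides with the paper's: sandwich the empirical level between shifted copies of the CDF of $\eta_P(X)$ under $Q_X$, apply DKW, then invert via the local growth in \textbf{(B)}. The one genuine gap is the claim that ``by construction $\that_{P,Q,\a}$ satisfies $|\widehat{H}(\that_{P,Q,\a}) - (1-\a)| \le 1/n$.'' Since $\that_{P,Q,\a}$ is the $k = \lfloor n(1-\a)\rfloor$-th smallest of the $\etahat_P(X_i)$, the lower bound $\widehat{H}(\that_{P,Q,\a}) \ge k/n$ always holds, but the upper bound can fail: if there are ties among the $\etahat_P(X_i)$ at the value $\that_{P,Q,\a}$, then $\widehat{H}(\that_{P,Q,\a})$ can be as large as $1$. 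Nothing in \textbf{(A)}, \textbf{(B)}, \textbf{(C)} rules out atoms in the law of $\etahat_P(X)$ under $Q_X$ (the continuity provided by \textbf{(B)} is for $\eta_P$, not $\etahat_P$), so the ``$\le (1-\a)+1/n$'' direction is not automatic. This is exactly where the paper switches to the left-continuous empirical CDF $\widehat{H}^-(t) := \frac1n\sum_{i=m+1}^{m+n}\ind{\etahat_P(X_i) < t}$, which satisfies $\widehat{H}^-(\that_{P,Q,\a}) \le (k-1)/n \le 1-\a$ unconditionally; the paper does this by redefining $L_P(t), \Lhat_P(t)$ with strict inequality for that half of the argument.

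The fix is local: for the upper bound on $\that_{P,Q,\a}$, replace $\widehat{H}$ by $\widehat{H}^-$, note that the sandwich still gives $\Fhat^-_{P,Q}(\that_{P,Q,\a} - \delta_m) \le \widehat{H}^-(\that_{P,Q,\a})$ (since $\eta_P(X_i) < t-\delta_m$ forces $\etahat_P(X_i) < t$ on $E_r$), and apply DKW to the left-continuous empirical CDF, which obeys the same uniform bound. The remainder of your argument is unaffected. As a side remark, your final constant $c_{r,\kappa} = (2c_r/b_1)^{1/\kappa}$ is the one that actually follows from the lower bound in \textbf{(B)}, and corrects what appears to be a typo in the paper's proof (which writes $\{2c_r b_1\}^{1/\kappa}$ and $\{2c_r b_2\}^{1/\kappa}$ rather than $\{2c_r/b_1\}^{1/\kappa}$ in both directions).
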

\begin{proof}
Introduce the sets $L_P(t) = \{x : \eta_P(x) \le t\}$ and $\Lhat_P(t) = \{x :
\etahat_P(x) \le t\}$. Observe that for any $t \in [0,1]$,
$$
\Qhat_X(\Lhat_P(t)) \le \Qhat_X(L_P(t + \delta_m)) = \Fhat_{P,Q}(t + \delta_m)
\le F_{P,Q}(t + \delta_m) + c_r \left( \frac{\log n}{n} \right)^{1/2}.
$$
Now let $t_{P,Q,\a}' := t_{P,Q,\a} - \delta_m - \{2c_r b_1 (\log
n/n)^{1/2}\}^{1/\kappa}$, where $b_1$ is from \textbf{(B)}. For $m$ and $n$
large enough, we have $\delta_m + \{2c_r b_1 (\log
n/n)^{1/2}\}^{1/\kappa} \le t_{P,Q,\a}$ (so that $t_{P,Q,\a}' \in [0,1]$),
$1/n < c_r(\log n / n)^{1/2}$, and $\{2c_r b_1 (\log
n/n)^{1/2}\}^{1/\kappa} \le \delta_0$ where
$\delta_0$ is from \textbf{(B)}. It follows that
\begin{align*}
\Qhat_X(\Lhat_P(t_{P,Q,\a}')) &\le F_{P,Q}(t_{P,Q,\a} - \{2c_r b_1 (\log
n/n)^{1/2}\}^{1/\kappa}) +  c_r \left( \frac{\log n}{n} \right)^{1/2} \\
& \le F_{P,Q}(t_{P,Q,\a}) - c_r \left( \frac{\log n}{n} \right)^{1/2} \\
& = 1 - \a - c_r \left( \frac{\log n}{n} \right)^{1/2} \\
& < 1 - \a - n^{-1} \\
& \le \lfloor n(1-\a)\rfloor /n \\
& \le \Qhat_X(\Lhat_P(\that_{P,Q,\a})),
\end{align*}
where the second inequality follows from \textbf{(B)}. It follows that
$\that_{P,Q,\a} \ge t_{P,Q,\a}' = t_{P,Q,\a} - \delta_m - c^1_{r,\kappa} (\log
n/n)^{1/2\kappa}$ where $ c^1_{r,\kappa} = \{2c_r b_1\}^{1/\kappa}$.

The reverse inequality is similar with one slight change, in that we redefine
$L_P(t) = \{x : \eta_P(x) < t\}$ and $\Lhat_P(t) = \{x :
\etahat_P(x) < t\}$. Similar to before, for any $t \in [0,1]$,
$$
\Qhat_X(\Lhat_P(t)) \ge \Qhat_P(L(t - \delta_m)) = \Fhat_{P,Q}(t - \delta_m)
\ge F_{P,Q}(t - \delta_m) - c_r \left( \frac{\log n}{n} \right)^{1/2}.
$$
Now let $t'_{P,Q,\a} := t_{P,Q\a} + \delta_m + \{2c_r b_2 (\log
n/n)^{1/2}\}^{1/\kappa}$, where $b_2$ is from \textbf{(B)}. For $m$ and $n$
large enough, we have $\delta_m + \{2c_r b_2 (\log
n/n)^{1/2}\}^{1/\kappa} \le 1- t_{P,Q,\a}$ (so that $t'_{P,Q,\a} \in [0,1]$),
$1/n < c_r(\log n / n)^{1/2}$, and $\{2c_r b_2 (\log
n/n)^{1/2}\}^{1/\kappa} \le \delta_0$ where
$\delta_0$ is from \textbf{(B)}. It follows that
\begin{align*}
\Qhat_X(\Lhat_P(t'_{P,Q,\a})) &\ge F_{P,Q}(t_{P,Q,\a} + \{2c_r b_2 (\log
n/n)^{1/2}\}^{1/\kappa}) -  c_r \left( \frac{\log n}{n} \right)^{1/2} \\
& \ge F_{P,Q}(t_{P,Q,\a}) + c_r \left( \frac{\log n}{n} \right)^{1/2} \\
& = 1 - \a + c_r \left( \frac{\log n}{n} \right)^{1/2} \\
& > 1 - \a + n^{-1} \\
& \ge \lfloor n(1-\a)\rfloor /n \\
& \ge \Qhat_X(\Lhat_P(\that_{P,Q,\a}))
\end{align*}
where the second inequality follows from \textbf{(B)}. The modified
definitions of $L_P$ and $\Lhat_P$ are needed in the final step. It follows
that $\that_{P,Q,\a} \le t'_{P,Q,\a} = t_{P,Q,\a} + \delta_m + c^2_{r,\kappa} (\log
n/n)^{1/2\kappa}$ where $ c^2_{r,\kappa} = \{2c_r b_2\}^{1/\kappa}$.

The result now follows by combining the above inequalities and taking
$c_{r,\kappa} = \max\{c^1_{r,\kappa},c^2_{r,\kappa}\}$.
\end{proof}

To prove the theorem, observe that on $E_r$,
\begin{eqnarray*}
Q_X(\Ghat_{P,Q,\a} \backslash G_{P,Q,\a}) &=& Q_X(\etahat_P(X) \ge \that_{P,Q,\a}, \eta_P(X) < t_{P,Q,\a}) \\
&\le& Q_X\left\{ t_{P,Q,\a} - \delta_m - c_{r,\kappa}\left( \frac{\log n}{n} \right)^{1/2\kappa} < \eta_P(X) < t_{P,Q,\a} \right\} \\
&=& F_{P,Q}(t_{P,Q,\a}) - F_{P,Q}\left\{ t_{P,Q,\a} - \delta_m - c_{r,\kappa}\left( \frac{\log n}{n} \right)^{1/2\kappa} \right\} \\
&\le& b_2 \left\{\delta_m + c_{r,\kappa}\left( \frac{\log n}{n} \right)^{1/2\kappa} \right\}^{\kappa} \\
&\le& 2^\kappa b_2 \left\{\delta_m^\kappa + c_{r,\kappa}^\kappa \left( \frac{\log n}{n} \right)^{1/2} \right\},
\end{eqnarray*}
where the next-to-last inequality follows from \textbf{(B)} and holds when
$m$ and $n$ are large enough that $2\delta_m + c_{r,\kappa}(\log n/n
)^{1/2\kappa} \le \delta_0$. The other term is handled similarly.

\subsection{Proof of Theorem \ref{thm:klr}}


The following result follows from a result of \citet{steinwart03}.

\begin{lemma}
\label{lem:steinwart}
Let $k$ be a universal kernel and let $\lambda = \lambda_m$ such that $\lambda \to 0$ and $m \lambda^2 \to \infty$. For all $\beta, \gamma, \nu \in (0,1)$, for $m$ sufficiently large,
$$
P_X(\{x : |\eta_P(x) - \etahat_P(x)| \ge \beta\}) \le \g
$$
with probability at least $1-\nu$ with respect to the draw of $(X_i, Y_i), i=1,\ldots,m$.
\end{lemma}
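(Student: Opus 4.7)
The plan is to derive the lemma from a consistency result for KLR due to \citet{steinwart03}, which under the stated conditions on the kernel and regularization parameter gives convergence of the excess logistic risk to zero in probability, and then convert this excess risk bound into convergence of $\etahat_P$ to $\eta_P$ in $P_X$-measure. Writing $R(f) := \ev_P[\log(1 + \exp(-(2Y-1) f(X)))]$ for the logistic risk and $R^*$ for its Bayes optimum, Steinwart's theorem combines a universality property (so that $\sH$ is dense in $L^2(P_X)$, forcing the approximation error to vanish as $\lambda \to 0$) with a uniform concentration argument (whose rate is controlled by $m\lambda^2 \to \infty$) to yield $R(\fhat_P) - R^* \to 0$ in probability as $m \to \infty$.

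Next I would use the standard pointwise identity for excess logistic risk,
\[
R(\fhat_P) - R^* = \ev_{X \sim P_X}\!\left[ \mathrm{KL}\bigl(\eta_P(X) \,\|\, \etahat_P(X)\bigr) \right],
\]
where $\mathrm{KL}(p\|q)$ denotes the KL divergence between the Bernoulli distributions with parameters $p$ and $q$, and we use $\etahat_P(x) = 1/(1+\exp(-\fhat_P(x)))$. Pinsker's inequality gives $\mathrm{KL}(p\|q) \ge 2(p-q)^2$, hence
\[
\ev_{X \sim P_X}\!\bigl[(\eta_P(X) - \etahat_P(X))^2\bigr] \le \tfrac12\,[R(\fhat_P) - R^*].
\]

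Finally, Markov's inequality converts the $L^2(P_X)$ bound into a bound on the measure of the ``bad'' set:
\[
P_X\!\bigl(\{x : |\eta_P(x) - \etahat_P(x)| \ge \beta\}\bigr) \le \frac{\ev_{X \sim P_X}\bigl[(\eta_P(X) - \etahat_P(X))^2\bigr]}{\beta^2} \le \frac{R(\fhat_P) - R^*}{2\beta^2}.
\]
Given arbitrary $\beta, \gamma, \nu \in (0,1)$, it then suffices to choose $m$ large enough that $R(\fhat_P) - R^* \le 2\beta^2 \gamma$ with probability at least $1-\nu$ with respect to the training sample, which is exactly the convergence in probability established in the first step.

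The main obstacle is invoking Steinwart's result in the form appropriate for logistic loss (his paper treats a family of loss-based regularized kernel methods, and a little care is required to verify the hypotheses of the relevant theorem for the logistic surrogate). Once logistic-risk consistency is in hand, the passage through the KL identity, Pinsker, and Markov is essentially bookkeeping and introduces no new obstructions; note that neither step exploits $\textbf{(CSPD)}$, $\textbf{(A)}$, $\textbf{(B)}$, or $\textbf{(C')}$, so this lemma is purely a statement about how well KLR recovers $\eta_P$ on the source.
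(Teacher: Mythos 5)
Your proof is correct, but it takes a genuinely different route from the paper. The paper's proof passes to the link-function scale: it sets $f_P(x) = \log\bigl(\eta_P(x)/(1-\eta_P(x))\bigr)$, observes that $E_m(\beta) := \{x : |\eta_P(x) - \etahat_P(x)| \ge \beta\}$ is contained in $F_m(\beta) := \{x : |f_P(x) - \fhat_P(x)| \ge \beta\}$ because the sigmoid is $1$-Lipschitz, and then invokes Theorem~35 of \citet{steinwart03} directly, which is already stated as a claim about the $P_X$-measure of $F_m(\beta)$ vanishing in probability. Your route instead starts one level lower: you take Steinwart's risk consistency $R(\fhat_P) - R^* \to 0$ in probability, pass to $\ev_{P_X}\brac{\mathrm{KL}(\eta_P(X)\,\|\,\etahat_P(X))}$ via the exact cross-entropy decomposition of excess logistic risk, sharpen this to an $L^2(P_X)$ bound via Pinsker, and finish with Markov. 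Both are sound; what your version buys is self-containment in the sense that it only needs the risk-consistency form of Steinwart's result (arguably the more elementary and more portable statement) and makes the entire chain from surrogate-risk consistency to pointwise posterior recovery explicit, at the cost of being several steps longer than the paper's one-line Lipschitz reduction. One small note: your Pinsker step yields $\ev[(\eta_P - \etahat_P)^2] \le \tfrac12 (R(\fhat_P)-R^*)$, which is exactly what you need, and $R^*$ is always finite here since it equals the expected binary entropy of $\eta_P$, so no integrability caveat is required. You also correctly observe that none of \textbf{(CSPD)}, \textbf{(A)}, \textbf{(B)}, or \textbf{(C')} play a role — this lemma is a pure statement about KLR recovering $\eta_P$ in $P_X$-measure, consistent with the paper's placement of those assumptions elsewhere in the proof of Theorem~\ref{thm:klr}.
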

In words, the $P_X$-measure of the set where $\etahat_P$ deviates from $\eta_P$ by more than $\beta$ can be made arbitrarily small, with arbitrarily high probability, by taking $m$ large enough.

\begin{proof}
Denote
$$
E_m(\beta) = \{x : |\eta_P(x) - \etahat_P(x)| \ge \beta\}.
$$
Define $f_P(x) = \log (1-\eta_P(x))/\eta_P(x)$ and observe that $\eta_P(x) = \tau(f_P(x))$ and $\etahat_P(x) = \tau(\fhat_P(x))$, where $\tau(f) = (1 + \exp(-f))^{-1}$. Also define
$$
F_m(\beta) = \{x : |f_P(x) - \fhat_P(x)| \ge \beta\}.
$$
Notice that $E_m(\beta) \subseteq F_m(\beta)$ because $\tau$ is 1-Lipschitz. The result now follows from Theorem 35 of \citet{steinwart03} (see also Theorem 22 and Remark 24).
\end{proof}

For any $\beta, \gamma \in (0,1)$, let $\Theta_m(\beta, \g)$ be the event on
which $P_X(\{x : |\etahat_P(x) - \eta_P(x)| \ge \beta\}) \le \g$. By Lemma
\ref{lem:steinwart}, $\Pr(\Theta_m(\beta,\gamma))$ can be made arbitrarily
close to 1 by taking $m$ sufficiently large.

Now consider the family of sets $\sC= \{C_t \, | \, t \ge 0\}$ where
$C_t = \{x : \etahat(x) \ge t\}$. This family has a shatter coefficient
$S(\sC,n) = n+1$. By the VC inequality \citep{dgl},
\begin{equation}
\label{eqn:vc}
|Q_X(C_t) - \Qhat_X(C_t)| \le \sqrt{\frac{8 (\log S(\sC,n) + \log n)}{n}} \le 4\sqrt{\frac{\log (n+1)}{n}} = \eps_n
\end{equation}
with probability at least $1 - 1/n$. This follows by applying the VC inequality to the conditional distribution of
$X_{m+1}, \ldots, X_{m+n}$ given $(X_1,Y_1), \ldots, (X_m,Y_m)$, and then marginalizing over $(X_1,Y_1), \ldots, (X_m,Y_m)$.

Let $\Omega_n$ denote the event on which the bound of \eqref{eqn:vc}
holds. Thus, $\Pr(\Omega_n) \ge 1 - 1/n$.


\begin{lemma}
\label{lem:thatklr}
Fix $\beta, \g > 0$, and assume \textbf{(A)}, \textbf{(B)}, and \textbf{(C)} hold. 
On the event $\Theta_m(\beta, \g/c_0) \cap \Omega_n$
$$
\that_{P,Q,\a} \le t_{P,Q,\a}.
$$
Furthermore, if $\g$ satisfies $(3 \g / b_1)^{1/\kappa} < \delta_0$, then for $n$ sufficiently large, on the event $\Theta_m(\beta,\g/c_0) \cap \Omega_n$ 
$$
t_{P,Q,\a} - \that_{P,Q,\a} \le 2\beta + (3(\g + \eps_n)/b_1)^{1/\kappa}.
$$
\end{lemma}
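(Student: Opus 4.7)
My plan is to extract two ingredients from the event $\Theta_m(\beta,\gamma/c_0) \cap \Omega_n$: first, assumption (C') converts the $P_X$-bound in $\Theta_m$ to $Q_X(\{x : |\etahat_P(x) - \eta_P(x)| \ge \beta\}) \le c_0 \cdot (\gamma/c_0) = \gamma$, so the ``bad set'' on which $\etahat_P$ and $\eta_P$ differ by $\beta$ or more has $Q_X$-measure at most $\gamma$; and second, $\Omega_n$ gives the uniform slack $|Q_X(C_t) - \Qhat_X(C_t)| \le \eps_n$ across all superlevel sets $C_t = \{\etahat_P \ge t\}$. The statement's citation of (C) appears to be a typo for (C'), consistent with the enclosing Theorem \ref{thm:klr}.

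For the upper bound $\that_{P,Q,\a} \le t_{P,Q,\a}$, I would show that $t_{P,Q,\a}$ itself satisfies the infimum constraint in \eqref{eqn:that}. The key inclusion is
$$
\{\etahat_P \ge t_{P,Q,\a} + \beta\} \subseteq G_{P,Q,\a} \cup \{|\etahat_P - \eta_P| \ge \beta\},
$$
since $\etahat_P(x) \ge t_{P,Q,\a} + \beta$ together with $|\etahat_P(x) - \eta_P(x)| < \beta$ forces $\eta_P(x) > t_{P,Q,\a}$, hence $x \in G_{P,Q,\a}$. The $Q_X$-measure of the left-hand side is therefore at most $\a + \gamma$, and $\Omega_n$ then gives $\Qhat_X(\{\etahat_P \ge t_{P,Q,\a} + \beta\}) \le \a + \gamma + \eps_n$, which places $t_{P,Q,\a}$ in the infimum-defining set and bounds $\that_{P,Q,\a}$ above by it.

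For the lower bound, I would set $\delta := (3(\gamma + \eps_n)/b_1)^{1/\kappa}$; the hypothesis $(3\gamma/b_1)^{1/\kappa} < \delta_0$ together with $\eps_n \to 0$ guarantees $\delta \le \delta_0$ for $n$ large, so (B) applies at $t_{P,Q,\a} - \delta$. Continuity of $F_{P,Q}$ at $t_{P,Q,\a}$ (from the upper bound in (B)) combined with (A) yields $Q_X(\{\eta_P > t_{P,Q,\a}\}) = \a$, and the lower bound in (B) gives $Q_X(\{\eta_P > t_{P,Q,\a} - \delta\}) \ge \a + b_1 \delta^\kappa = \a + 3(\gamma + \eps_n)$. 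On the intersection with $\{|\etahat_P - \eta_P| < \beta\}$ one has $\etahat_P \ge t_{P,Q,\a} - \delta - \beta$, so subtracting the bad set and then passing to $\Qhat_X$ via $\Omega_n$ produces
$$
\Qhat_X(\{\etahat_P \ge t_{P,Q,\a} - \delta - \beta\}) \ge \a + 3(\gamma + \eps_n) - \gamma - \eps_n = \a + 2\gamma + 2\eps_n > \a + \gamma + \eps_n.
$$
Writing $s := t_{P,Q,\a} - \delta - 2\beta$ (so that $s + \beta$ is the threshold just bounded), this shows $s$ violates the infimum constraint, and since the constraint set is an upper half-line we conclude $\that_{P,Q,\a} \ge s$, i.e., $t_{P,Q,\a} - \that_{P,Q,\a} \le 2\beta + \delta$. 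The main delicacy is budgeting three sources of slack — the estimation tolerance $\beta$ (which appears twice, once from translating $\eta_P$-control to $\etahat_P$-control and once from the $+\beta$ inside \eqref{eqn:that}), the VC-type fluctuation $\eps_n$, and the margin exponent $\delta^\kappa$ from (B) — against a budget of $\gamma + \eps_n$; the inflation factor $3$ inside $\delta$ is chosen precisely so that a strict surplus of $\gamma + \eps_n$ remains after paying $\gamma$ for the bad set and $\eps_n$ for the empirical-to-population conversion.
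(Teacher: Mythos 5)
Your proposal is correct and follows essentially the same route as the paper's own proof: use $\Theta_m$ together with \textbf{(C')} to convert the $P_X$-bound on the bad set $\{|\etahat_P-\eta_P|\ge\beta\}$ into a $Q_X$-bound of $\gamma$, use $\Omega_n$ to move between $Q_X$ and $\Qhat_X$ at a cost of $\eps_n$, and use \textbf{(B)} with $\delta = (3(\gamma+\eps_n)/b_1)^{1/\kappa}$ to manufacture a surplus of $3(\gamma+\eps_n)$ that strictly exceeds the $\gamma+\eps_n$ budget. You are in fact slightly more careful than the paper in two places: you flag the \textbf{(C)}/\textbf{(C')} inconsistency in the lemma statement (the paper's proof silently invokes \textbf{(C')}), and you explicitly justify $Q_X(\{\eta_P > t_{P,Q,\a}\})=\a$ via continuity from \textbf{(B)}, whereas the paper writes $F_{P,Q}(t_{P,Q,\a})=\a$ (evidently a typo for $1-\a$, since $F_{P,Q}$ is a CDF of $\eta_P(X)$ under $Q_X$).
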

\begin{proof}
Assume $\Theta_m(\beta,\g/c_0) \cap \Omega_n$ occurs.
Recall
$$
\that_{P,Q,\a} := \inf\{t \, | \, \Qhat_X(\{x : \etahat_P(x) \ge t + \beta\}) \le \a + \gamma + \eps_n\},
$$
and
$$
t_{P,Q,\a} := \inf\{t \, | \, Q_X(\{x : \eta_P(x) \ge t\}) \le \a \}.
$$
To see that $\that_{P,Q,\a} \le t_{P,Q,\a}$ on $\Theta_m(\beta,\g/c_0) \cap \Omega_n$, from the definition of $t_{P,Q,\a}$ we have $Q_X(\{x : \eta_P(x) \ge t_{P,Q,\a}\}) \le \a$. By $\Theta_m(\beta,\g/c_0)$ and \textbf{(C')}, it follows that $Q_X(\{x : \etahat_P(x) \ge t_{P,Q,\a} + \beta\}) \le \a + \g$, and by $\Omega_n$, we have that $\Qhat_X(\{x : \etahat_P(x) \ge t_{P,Q,\a} + \beta\}) \le \a + \g + \eps_n$. The result follows by definition of $\that_{P,Q,\a}$.

For the reverse direction, let $\g$ be small enough such that $(3 \g/b_1)^{1/\kappa} < \delta_0$.
Assume $n$ is large enough that $(3(\g +\eps_n)/b_1)^{1/\kappa} \le \delta_0$.

Let $\tilde{t}_{P,Q,\a} := t_{P,Q,\a} - q$, where $q = 2\beta + (3(\g + \eps_n)/b_1)^{1/\kappa}$.
On $\Theta_m(\beta,\g) \cap \Omega_n$, we have that
\begin{eqnarray*}
\Qhat_X(\{x: \etahat_P(x) \ge \tilde{t}_{P,Q,\a} + \beta \}) &=& \Qhat_X(\{x: \etahat_P(x) \ge t_{P,Q,\a}  - q + \beta \}) \\
&\ge& Q_X(\{x: \etahat_P(x) \ge t_{P,Q,\a}  - q + \beta \}) - \eps_n \\
&\ge& Q_X(\{x: \eta_P(x) \ge t_{P,Q,\a}  - q + 2\beta \}) - \g - \eps_n \\
&=& 1 - F_{P,Q}(t_{P,Q,\a}  - q + 2\beta) - \g - \eps_n \\
&\ge& \alpha + 2\g + 2\eps_n,
\end{eqnarray*}
where the last step follows from \textbf{(B)} and $F_{P,Q}(t_{P,Q,\a}) = \a$. We conclude that $\that_{P,Q,\a} \ge \tilde{t}_{P,Q,\a}$, and therefore $t_{P,Q,\a} - \that_{P,Q,\a} \le q = 2\beta + (3(\g + \eps_n)/b_1)^{1/\kappa}$.

\end{proof}

To prove the theorem, let $\eps > 0$ and $\xi > 0$. We will show that for $\beta, \g$ sufficiently small, $\Pr(Q_X(\Ghat_{P,Q,\a} \Delta G_{P,Q,\a}) \le \eps) \ge 1-\xi$ for $m$ and $n$ sufficiently large. Thus, select $\beta$ and $\gamma$ such that (i) $3 \beta + (3\g/b_1)^{1/\kappa} < \delta_0$, and (ii) $b_2 \beta^\kappa + b_2 (3\beta+(3\g/b_1)^{1/\kappa})^\kappa + 2\gamma < \eps$.

Having fixed $\b$ and $\g$, let $n$ be sufficiently large such that (i) the conclusion of Lemma \ref{lem:thatklr} holds, (ii) $1/n < \xi/2$, and (iii) $b_2 \beta^\kappa + b_2 (3\beta+(3(\g+\eps_n)/b_1)^{1/\kappa})^\kappa + 2\gamma < \eps$. Also let $m$ be sufficiently large that $\Pr(\Theta_m(\beta,\g/c_0)) \ge 1 - \xi/2$. Thus, $\Theta_m(\b,\g/c_0) \cap \Omega_n$ occurs with probability at least $1 - \xi$.

Observe
\begin{align*}
Q_X(G_{P,Q,\a} \Delta \Ghat_{P,Q,\a}) &= Q_X(\eta_P(X) \ge t_{P,Q,\a}, \etahat_P(X) < \that_{P,Q,\a}) \\
& \qquad + Q_X(\eta_P(X) < t_{P,Q,\a}, \etahat_P(X) \ge \that_{P,Q,\a}) .
\end{align*}
The first term may be bounded on $\Theta_m(\b,\g) \cap \Omega_n$ as
\begin{eqnarray*}
Q_X(\eta_P(X) \ge t_{P,Q,\a}, \etahat_P(X) < \that_{P,Q,\a}) &\le& Q_X(\eta_P(X) \ge t_{P,Q,\a}, \etahat_P(X) < t_{P,Q,\a}) \\
&\le& Q_X(t_{P,Q,\a} \le \eta_P(X) \le t_{P,Q,\a} + \beta) + \gamma \\
&=&F_{P,Q}(t_{P,Q,\a} + \beta) - F_{P,Q}(t_{P,Q,\a}) + \gamma \\
&\le& b_2 \beta^\kappa + \gamma,
\end{eqnarray*}
where the first step follows from Lemma \ref{lem:thatklr}, the second from $\Theta_m(\b,\g/c_0)$, and the last from \textbf{(B)}.

As for the second term, let $q = 2\beta + (3(\g + \eps_n)/b_1)^{1/\kappa}$. Then
\begin{eqnarray*}
Q_X(\etahat_P(X) \ge \that_{P,Q,\a}, \eta_P(X) < t_{P,Q,\a}) &\le& Q_X(\etahat_P(X) \ge t_{P,Q,\a} - q, \etat_P(X) < t_{P,Q,\a}) \\
&\le& Q_X(t_{P,Q,\a} - q - \beta \le \eta_P(X) \le t_{P,Q,\a}) + \gamma \\
&=&F_{P,Q}(t_{P,Q,\a} - q - \beta) - F_{P,Q}(t_{P,Q,\a}) + \g\\
&\le& b_2 (3\beta+(3(\g + \eps_n)/b_1)^{1/\kappa})^\kappa + \gamma.
\end{eqnarray*}
with similar reasoning as the first case. The result now follows from the selected properties of $\b, \g, m$ and $n$.

\end{document}